\begin{document}

\title{Learning Scalable Deep Kernels with Recurrent Structure}

\author{%
  \name{Maruan Al-Shedivat} \email{alshedivat@cs.cmu.edu} \\
  \addr{Carnegie Mellon University}
  \AND
  \name Andrew Gordon Wilson \email andrew@cornell.edu \\
  \addr Cornell University
  \AND
  \name Yunus Saatchi \email saatchi@cantab.net
  \AND
  \name Zhiting Hu \email zhitingh@cs.cmu.edu \\
  \addr Carnegie Mellon University
  \AND
  \name Eric P. Xing \email epxing@cs.cmu.edu \\
  \addr Carnegie Mellon University%
}

\editor{Neil Lawrence}

\maketitle

%!TEX root = ../16-498.tex

\begin{abstract}%
Many applications in speech, robotics, finance, and biology deal with sequential data, where ordering matters and recurrent structures are common.
However, this structure cannot be easily captured by standard kernel functions.
To model such structure, we propose expressive closed-form kernel functions for Gaussian processes.
The resulting model, GP-LSTM, fully encapsulates the inductive biases of long short-term memory (LSTM) recurrent networks, while retaining the non-parametric probabilistic advantages of Gaussian processes.
We learn the properties of the proposed kernels by optimizing the Gaussian process marginal likelihood using a new provably convergent semi-stochastic gradient procedure, and exploit the structure of these kernels for scalable training and prediction.
This approach provides a practical representation for Bayesian LSTMs.
We demonstrate state-of-the-art performance on several benchmarks, and thoroughly investigate a consequential autonomous driving application, where the predictive uncertainties provided by GP-LSTM are uniquely valuable.
\end{abstract}

%!TEX root = ../16-498.tex

\section{Introduction}\label{sec:introduction}

There exists a vast array of machine learning applications where the underlying datasets are sequential.
Applications range from the entirety of robotics, to speech, audio and video processing.
While neural network based approaches have dealt with the issue of \emph{representation learning} for sequential data, the important question of modeling and propagating uncertainty across time has rarely been addressed by these models.
For a robotics application such as a self-driving car, however, it is not just desirable, but essential to have complete predictive densities for variables of interest.
When trying to stay in lane and keep a safe following distance from the vehicle front, knowing the uncertainty associated with lanes and lead vehicles is as important as the point estimates.

Recurrent models with long short-term memory (LSTM)~\citep{hochreiter1997long} have recently emerged as the leading approach to modeling sequential structure.
The LSTM is an efficient gradient-based method for training recurrent networks.
LSTMs use a memory cell inside each hidden unit and a special gating mechanism that stabilizes the flow of the back-propagated errors, improving the learning process of the model.
While the LSTM provides state-of-the-art results on speech and text data~\citep{graves2013speech,sutskever2014sequence}, quantifying uncertainty or extracting full predictive distributions from deep models is still an area of active research~\citep{gal2016dropout}.

In this paper, we quantify the predictive uncertainty of deep models by following a Bayesian nonparametric approach.
In particular, we propose kernel functions which fully encapsulate the structural properties of LSTMs, for use with Gaussian processes.
The resulting model enables Gaussian processes to achieve state-of-the-art performance on \emph{sequential regression tasks}, while also allowing for a principled representation of uncertainty and non-parametric flexibility.
Further, we develop a provably convergent semi-stochastic optimization algorithm that allows mini-batch updates of the recurrent kernels.
We empirically demonstrate that this semi-stochastic approach significantly improves upon the standard non-stochastic first-order methods in
runtime and in the quality of the converged solution.
For additional scalability, we exploit the algebraic structure of these kernels, decomposing the relevant covariance matrices into Kronecker products of circulant matrices, for $\mathcal{O}(n)$ training time and $\mathcal{O}(1)$ test predictions~\citep{wilson2015msgp, wilson2015kissgp}.
Our model not only can be interpreted as a Gaussian process with a recurrent kernel, but also as a deep recurrent network with probabilistic outputs, infinitely many hidden units, and a utility function robust to overfitting.

Throughout this paper, we assume basic familiarity with Gaussian processes (GPs).
We provide a brief introduction to GPs in the background section; for a comprehensive reference, see, e.g., \citet{williams2006gaussian}.
In the following sections, we formalize the problem of learning from sequential data, provide background on recurrent networks and the LSTM, and present an extensive empirical evaluation of our model.
Specifically, we apply our model to a number of tasks, including system identification, energy forecasting, and self-driving car applications.
Quantitatively, the model is assessed on the data ranging in size from hundreds of points to almost a million with various signal-to-noise ratios demonstrating state-of-the-art performance and linear scaling of our approach.
Qualitatively, the model is tested on consequential self-driving applications: lane estimation and lead vehicle position prediction.
Indeed, the main focus of this paper is on achieving state-of-the-art performance on consequential applications involving sequential data, following straightforward and scalable approaches to building highly flexible Gaussian process.

We release our code as a library at: \url{http://github.com/alshedivat/keras-gp}.
This library implements the ideas in this paper as well as deep kernel learning \citep{wilson2016dkl} via a Gaussian process layer that can be added to \emph{arbitrary} deep architectures and deep learning frameworks, following the Keras API specification.
More tutorials and resources can be found at \url{https://people.orie.cornell.edu/andrew/code}.

%!TEX root = ../16-498.tex

\section{Background}\label{sec:background}

We consider the problem of learning a regression function that maps sequences to real-valued target vectors.
Formally, let $\overline{\Xv} = \{\overline{\xv_i}\}_{i=1}^n$ be a collection of sequences, $\overline{\xv_i} = [\xv^1_i, \xv^2_i, \cdots, \xv^{l_i}]$, each with corresponding length, $l_i$, where $\xv^j_i \in \Xc$, and $\Xc$ is an arbitrary domain.
Let $\yv = \{\yv_i\}_{i=1}^n$, $\yv_i \in \Rb^d$, be a collection of the corresponding real-valued target vectors.
Assuming that only the most recent $L$ steps of a sequence are predictive of the targets, the goal is to learn a function, $f : \Xc^L \mapsto \Rb^d$, from some family, $\Fc$, based on the available data.

As a working example, consider the problem of estimating position of the lead vehicle at the next time step from LIDAR, GPS, and gyroscopic measurements of a self-driving car available for a number of previous steps.
This task is a classical instance of the \emph{sequence-to-reals regression}, where a temporal sequence of measurements is regressed to the future position estimates.
In our notation, the sequences of inputs are vectors of measurements, $\overline{\xv_1} = [\xv^1],\, \overline{\xv_2} = [\xv^1, \xv^2],\, \dots,\, \overline{\xv_n} = [\xv^1, \xv^2, \cdots \xv^n]$, are indexed by time and would be of growing lengths.
Typically, input sequences are considered up to a finite-time horizon, $L$, that is assumed to be predictive for the future targets of interest.
The targets, $\yv_1, \yv_2,\, \dots,\, \yv_n$, are two-dimensional vectors that encode positions of the lead vehicle in the ego-centric coordinate system of the self-driving car.

Note that the problem of learning a mapping, $f : \Xc^L \mapsto \Rb^d$, is challenging.
While considering whole sequences of observations as input features is necessary for capturing long-term temporal correlations, it virtually blows up the dimensionality of the problem.
If we assume that each measurement is $p$-dimensional, i.e., $\Xc \subseteq \Rb^p$, and consider $L$ previous steps as distinct features, the regression problem will become $(L \times p)$-dimensional.
Therefore, to avoid overfitting and be able to extract meaningful signal from a finite amount of data, it is crucial to exploit the sequential nature of observations.

\textbf{Recurrent models.}
One of the most successful ways to exploit sequential structure of the data is by using a class of recurrent models.
In the sequence-to-reals regression scenario, such a model expresses the mapping $f : \Xc^L \mapsto \Rb^d$ in the following general recurrent form:
\begin{equation}
    \yv = \psiv(\hv^L) + \epsilonv^t,\ \hv^t = \phiv(\hv^{t-1}, \xv^t) + \deltav^t,\, t = 1, \dots, L,
\end{equation}
where $\xv^t$ is an input observation at time $t$, $\hv^t$ is a corresponding latent representation, and $\yv$ is a target vector.
Functions $\phiv(\cdot)$ and $\psiv(\cdot)$ specify model transitions and emissions, respectively, and $\deltav^t$ and $\epsilonv^t$ are additive noises.
While $\phiv(\cdot)$ and $\psiv(\cdot)$ can be arbitrary, they are typically time-invariant.
This strong but realistic assumption incorporated into the structure of the recurrent mapping significantly reduces the complexity of the family of functions, $\Fc$, regularizes the problem, and helps to avoid severe overfitting.

Recurrent models can account for various patterns in sequences by \emph{memorizing} internal representations of their dynamics via adjusting $\phiv$ and $\psiv$.
Recurrent neural networks (RNNs) model recurrent processes by using linear parametric maps followed by nonlinear activations:
\begin{equation}
    \yv = \psiv(W_{hy}^\top\hv^{t-1}),\ \hv^t = \phiv(W_{hh}^\top \hv^{t-1}, W_{xh}^\top \xv^{t-1}),\, t = 1, \dots, L,
\end{equation}
where $W_{hy}, W_{hh}, W_{xh}$ are weight matrices to be learned\footnote{The bias terms are omitted for clarity of presentation.} and $\phiv(\cdot)$ and $\psiv(\cdot)$ here are some fixed element-wise functions.
Importantly and contrary to the standard hidden Markov models (HMMs), \emph{the state} of an RNN at any time $t$ is \emph{distributed} and effectively represented by an entire hidden sequence, $[\hv^1, \cdots, \hv^{t-1}, \hv^t]$.
A major disadvantage of the vanilla RNNs is that their training is nontrivial due to the so-called \emph{vanishing gradient} problem~\citep{bengio1994learning}: the error back-propagated through $t$ time steps diminishes exponentially which makes learning long-term relationships nearly impossible.

\textbf{LSTM.}
To overcome vanishing gradients, \cite{hochreiter1997long} proposed a long short-term memory (LSTM) mechanism that places a \emph{memory cell} into each hidden unit and uses differentiable gating variables.
The update rules for the hidden representation at time $t$ have the following form (here $\sigmoid{\cdot}$ and $\tanhoid{\cdot}$ are element-wise sigmoid and hyperbolic tangent functions, respectively):\\[-1.5ex]
\begin{minipage}[t]{\textwidth}
    \begin{minipage}[t]{0.30\textwidth}
        \vspace{-4pt}
        \centering
        \includegraphics[width=\textwidth]{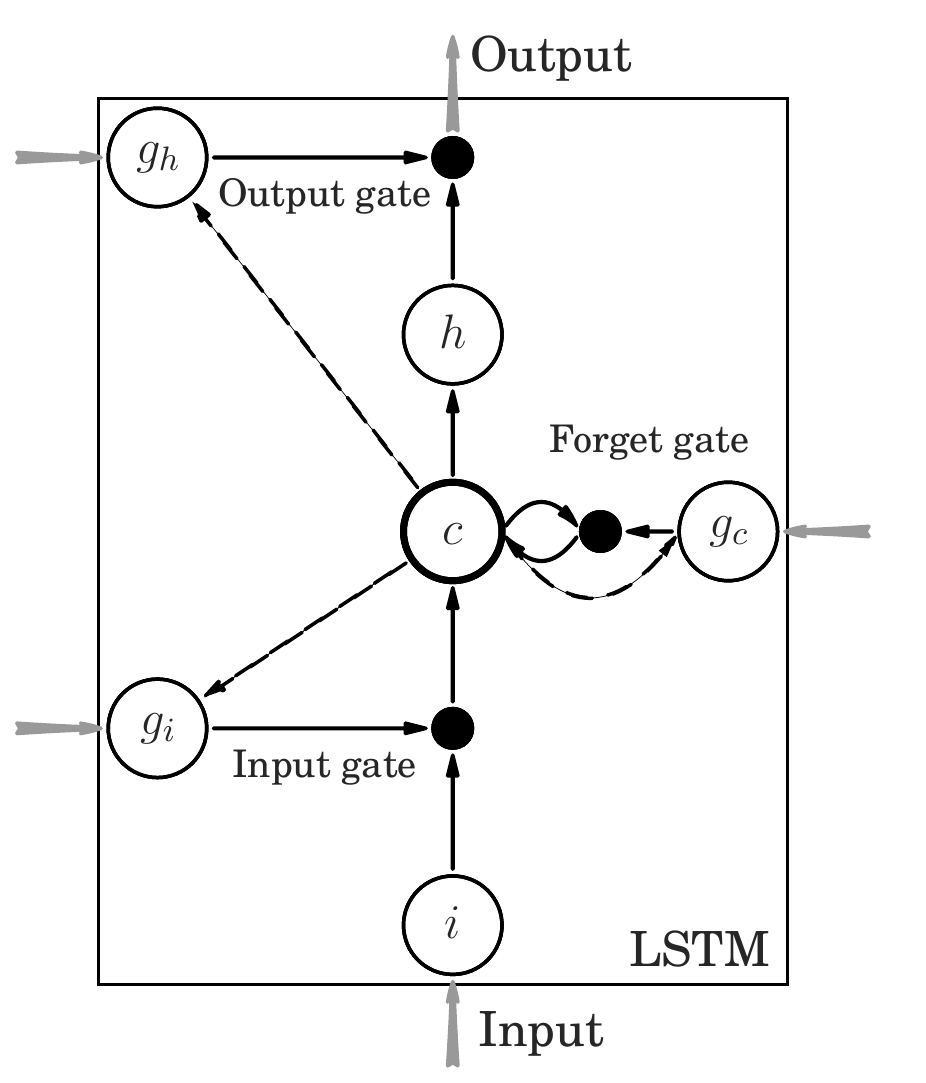}
    \end{minipage}%
    \begin{minipage}[t]{0.70\textwidth}
        \vspace{0pt}
        \begin{equation}
        \begin{aligned}
        \iv^t & = \tanhoid{W_{xc}^\top \xv^t + W_{hc}^\top \hv^{t-1} + \bv_c},\\
        \gv_i^t & = \sigmoid{W_{xi}^\top \xv^t + W_{hi}^\top \hv^{t-1} + W_{ci}^\top \cv^{t-1} + \bv_i},\\
        \cv^t & = \gv_c^t\ \cv^{t-1} + \gv_i^t\ \iv^t,\\
        \gv_c^t & = \sigmoid{W_{xf}^\top \xv^t + W_{hf}^\top \hv^{t-1} + W_{cf}^\top \cv^{t-1} + \bv_f},\\
        \ov^t & = \tanhoid{\cv^t},\\
        \gv_o^t & = \sigmoid{W_{xo}^\top \xv^t + W_{ho}^\top \hv^{t-1} + W_{co}^\top \cv^t + \bv_o},\\
        \hv^t & = \gv_o^t\ \ov^t.
        \end{aligned}
        \end{equation}
    \end{minipage}
\end{minipage}
As illustrated above, $\gv_i^t$, $\gv_c^t$, and $\gv_o^t$ correspond to the input, forget, and output gates, respectively.
These variables take their values in $[0, 1]$ and when combined with the internal states, $\cv^t$, and inputs, $\xv^t$, in a multiplicative fashion, they play the role of soft gating.
The gating mechanism not only improves the flow of errors through time, but also, allows the the network to decide whether to keep, erase, or overwrite certain memorized information based on the forward flow of inputs and the backward flow of errors.
This mechanism adds stability to the network's memory.

\textbf{Gaussian processes.}
The Gaussian process (GP) is a Bayesian nonparametric model that generalizes the Gaussian distributions to functions.
We say that a random function $f$ is drawn from a GP with a mean function $\mu$ and a covariance kernel $k$, $f \sim \GP{\mu, k}$, if for any vector of inputs, $[\xv_1, \xv_2, \dots, \xv_n]$, the corresponding vector of function values is Gaussian:
\begin{equation*}
  [f(\xv_1), f(\xv_2), \dots, f(\xv_n)] \sim \gau{\muv, K_{X,X}},
\end{equation*}
with mean $\muv$, such that $\muv_i = \mu(\xv_i)$, and covariance matrix $K_{X,X}$ that satisfies $(K_{X,X})_{ij} = k(\xv_i, \xv_j)$.
GPs can be seen as distributions over the reproducing kernel Hilbert space (RKHS) of functions which is uniquely defined by the kernel function, $k$~\citep{scholkopf2002learning}.
GPs with RBF kernels are known to be universal approximators with prior support to within an arbitrarily small epsilon band of any continuous function~\citep{micchelli2006universal}.

Assuming additive Gaussian noise, $y \mid \xv \sim \gau{f(\xv), \sigma^2}$, and a GP prior on $f(\xv)$, given training inputs $\xv$ and training targets $\yv$, the predictive distribution of the GP evaluated at an arbitrary test point $\xv_*$ is:
\begin{equation}
  \fv_* \mid \xv_*, \xv, \yv, \sigma^2 \sim \gau{\mathbb{E}[\fv_*], \Cov[\fv_*]},
\end{equation}
where
\begin{equation}
  \begin{aligned}
    \Expec[\fv_*] &= \muv_{X_*} + K_{X_*,X} [K_{X,X} + \sigma^2 I]^{-1}\yv,\\
    \Cov[\fv_*] &= K_{X_*,X_*} - K_{X_*,X} [K_{X,X} + \sigma^2 I]^{-1} K_{X,X_*}.
  \end{aligned}
\end{equation}
Here, $K_{X_*,X}$, $K_{X,X_*}$, $K_{X,X}$, and $K_{X_*,X_*}$ are matrices that consist of the covariance function, $k$, evaluated at the corresponding points, $\xv \in \Xv$ and $\xv_* \in \Xv_*$, and $\muv_{X_*}$ is the mean function evaluated at $\xv_* \in \Xv_*$.
GPs are fit to the data by optimizing \emph{the evidence}---the marginal probability of the data given the model---with respect to kernel hyperparameters.
The evidence has the form:
\begin{equation}
    \log \prob{\yv \mid \xv} = - \left[\yv^\top (K + \sigma^2 I)^{-1}\yv + \log \det (K + \sigma^2 I)\right] + \const,
\end{equation}
where we use a shorthand $K$ for $K_{X,X}$, and $K$ implicitly depends on the kernel hyperparameters.
This objective function consists of a \emph{model fit} and a \emph{complexity penalty} term that results in an automatic Occam's razor for realizable functions~\citep{rasmussen2001occam}.
By optimizing the evidence with respect to the kernel hyperparameters, we effectively learn the the structure of the space of functional relationships between the inputs and the targets.
For further details on Gaussian processes and relevant literature we refer interested readers to the classical book by~\cite{williams2006gaussian}.

Turning back to the problem of learning from sequential data, it seems natural to apply the powerful GP machinery to modeling complicated relationships.
However, GPs are limited to learning only pairwise correlations between the inputs and are unable to account for long-term dependencies, often dismissing complex temporal structures.
Combining GPs with recurrent models has potential to addresses this issue.

%!TEX root = ../16-498.tex

\section{Related work}\label{sec:related-work}

The problem of learning from sequential data, especially from temporal sequences, is well known in the control and dynamical systems literature.
Stochastic temporal processes are usually described either with generative \emph{autoregressive models} (AM) or with \emph{state-space models} (SSM)~\citep{van2012subspace}.
The former approach includes nonlinear auto-regressive models with exogenous inputs (NARX) that are constructed by using, \textit{e.g.}, neural networks~\citep{lin1996learning} or Gaussian processes~\citep{kocijan2005dynamic}.
The latter approach additionally introduces unobservable variables, \emph{the state}, and constructs autoregressive dynamics in the latent space.
This construction allows to represent and propagate uncertainty through time by explicitly modeling the signal (via the state evolution) and the noise.
Generative SSMs can be also used in conjunction with discriminative models via the Fisher kernel~\citep{jaakkola1999fisherkernel}.

Modeling time series with GPs is equivalent to using \textit{linear-Gaussian} autoregressive or SSM models~\citep{box1994time}.
Learning and inference are efficient in such models, but they are not designed to capture long-term dependencies or correlations beyond pairwise.
\citet{wang2005gaussian} introduced GP-based state-space models (GP-SSM) that use GPs for transition and/or observation functions.
These models appear to be more general and flexible as they account for uncertainty in the state dynamics, though require complicated approximate training and inference, which are hard to scale~\citep{turner2010state,frigola2014variational}.

Perhaps the most recent relevant work to our approach is \emph{recurrent Gaussian processes} (RGP)~\citep{mattos2015recurrent}.
RGP extends the GP-SSM framework to regression on sequences by using a recurrent architecture with GP-based activation functions.
The structure of the RGP model mimics the standard RNN, where every parametric layer is substituted with a Gaussian process.
This procedure allows one to propagate uncertainty throughout the network for an additional cost.
Inference is intractable in RGP, and efficient training requires a sophisticated \emph{approximation} procedure, the so-called \emph{recurrent variational Bayes}.
In addition, the authors have to turn to RNN-based approximation of the variational mean functions to battle the growth of the number of variational parameters with the size of data.
While technically promising, RGP seems problematic from the application perspective, especially in its implementation and scalability aspects.

Our model has several distinctions with prior work aiming to regress sequences to reals.
Firstly, one of our goals is to keep the model as simple as possible while being able to represent and quantify \emph{predictive uncertainty}.
We maintain an analytical objective function and refrain from complicated and difficult-to-diagnose inference schemes.
This simplicity is achieved by giving up the idea of propagating signal through a chain GPs connected in a recurrent fashion.
Instead, we propose to directly \emph{learn kernels} with recurrent structure via joint optimization of a simple functional composition of a standard GP with a recurrent model (\emph{e.g.}, LSTM), as described in detail in the following section.
Similar approaches have recently been explored and proved to be fruitful for non-recurrent deep networks~\citep{wilson2016dkl, wilson2016stochastic, calandra2016manifold}.
We remark that combinations of GPs with nonlinear functions have also been considered in the past in a slightly different setting of warped regression targets~\citep{snelson2003warped,wilson2010copula,lazaro2012bayesian}.
Additionally, uncertainty over the recurrent parts of our model is represented via dropout, which is computationally cheap and turns out to be equivalent to approximate Bayesian inference in a deep Gaussian process~\citep{damianou2013deep} with particular intermediate kernels~\citep{gal2016dropout,gal2016theoretically}.
Finally, one can also view our model as a standalone flexible Gaussian process, which leverages learning techniques that scale to massive datasets~\citep{wilson2015kissgp, wilson2015msgp}.

%!TEX root = ../16-498.tex

%!TEX root = ../16-498.tex

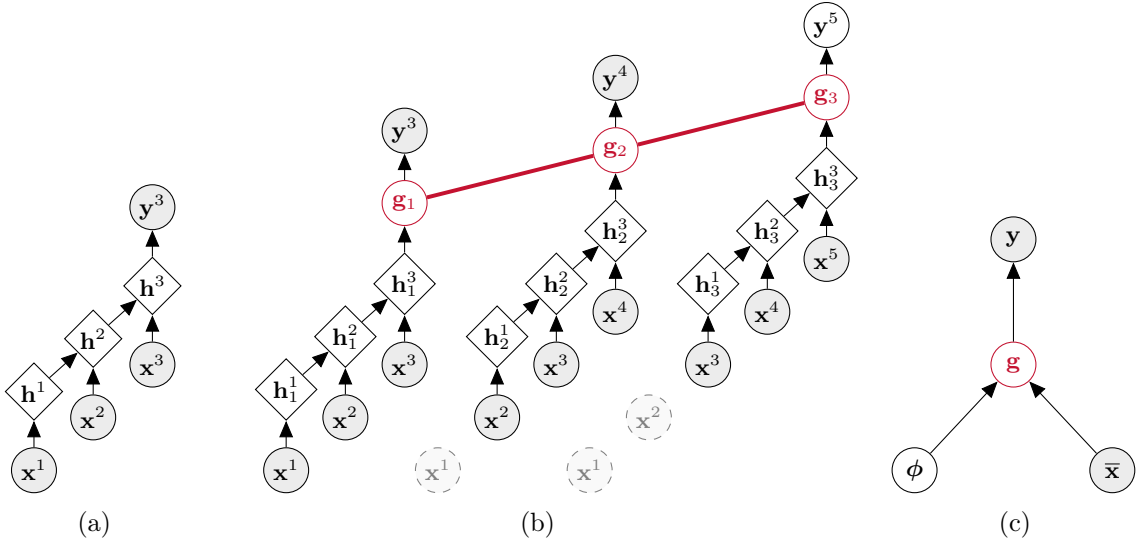
\begin{figure*}[t!]
% \vspace{-4ex}
\centering
\begin{subfigure}[b]{0.16\textwidth}
  \begin{tikzpicture}
    % Redefine some styles
    \tikzstyle{latent} = [circle,draw=black,inner sep=1pt,
    minimum size=17pt, font=\fontsize{9}{9}\selectfont, node distance=1]

    % Sample 1
    \node (x11) [obs] {$\xv^1$};
    \node (x12) [obs, right=5pt of x11, yshift=20pt] {$\xv^2$};
    \node (x13) [obs, right=5pt of x12, yshift=20pt] {$\xv^3$};

    \node (h11) [det, above=10pt of x11] {$\hv^1$};
    \node (h12) [det, above=10pt of x12] {$\hv^2$};
    \node (h13) [det, above=10pt of x13] {$\hv^3$};

    % \node (y11) [obs, dashed, draw=black!50, fill=gray!5, above=10pt of h11] {\textcolor{black!50}{$\yv^1$}};
    % \node (y12) [obs, dashed, draw=black!50, fill=gray!5, above=10pt of h12] {\textcolor{black!50}{$\yv^2$}};
    \node (y13) [obs, above=10pt of h13] {$\yv^3$};

    \edge {x11} {h11};
    \edge {x12} {h12};
    \edge {x13} {h13};
    \edge {h11} {h12};
    \edge {h12} {h13};
    \edge {h13} {y13};
  \end{tikzpicture}
  \caption{}\label{fig:lstm}
\end{subfigure}
\hfill
\begin{subfigure}[b]{0.50\textwidth}
  \begin{tikzpicture}
    % Redefine some styles
    \tikzstyle{latent} = [circle,draw=black,inner sep=1pt,
    minimum size=17pt, font=\fontsize{9}{9}\selectfont, node distance=1]

    % Sample 1
    \node (x11) [obs] {$\xv^1$};
    \node (x12) [obs, right=5pt of x11, yshift=20pt] {$\xv^2$};
    \node (x13) [obs, right=5pt of x12, yshift=20pt] {$\xv^3$};

    \node (h11) [det, above=10pt of x11] {$\hv^1_1$};
    \node (h12) [det, above=10pt of x12] {$\hv^2_1$};
    \node (h13) [det, above=10pt of x13] {$\hv^3_1$};

    \node (g13) [latent, draw=cmu_red,  above=10pt of h13] {\textcolor{cmu_red}{$\gv_1$}};

    \node (y13) [obs, above=10pt of g13] {$\yv^3$};

    \edge {x11} {h11};
    \edge {x12} {h12};
    \edge {x13} {h13};
    \edge {h11} {h12};
    \edge {h12} {h13};
    \edge {h13} {g13};
    \edge {g13} {y13};

    % Sample 2
    \node (x21) [obs, dashed, draw=black!50, fill=gray!5, right=40pt of x11] {\textcolor{black!50}{$\xv^1$}};
    \node (x22) [obs, right=5pt of x21, yshift=20pt] {$\xv^2$};
    \node (x23) [obs, right=5pt of x22, yshift=20pt] {$\xv^3$};
    \node (x24) [obs, right=5pt of x23, yshift=20pt] {$\xv^4$};

    \node (h21) [det, above=10pt of x22] {$\hv^1_2$};
    \node (h22) [det, above=10pt of x23] {$\hv^2_2$};
    \node (h23) [det, above=10pt of x24] {$\hv^3_2$};

    \node (g24) [latent, draw=cmu_red,  above=10pt of h23] {\textcolor{cmu_red}{$\gv_2$}};

    \node (y24) [obs, above=10pt of g24] {$\yv^4$};

    \edge {x22} {h21};
    \edge {x23} {h22};
    \edge {x24} {h23};
    \edge {h21} {h22};
    \edge {h22} {h23};
    \edge {h23} {g24};
    \edge {g24} {y24};

    % Sample 3
    \node (x31) [obs, dashed, draw=black!50, fill=gray!5, right=40pt of x21] {\textcolor{black!50}{$\xv^1$}};
    \node (x32) [obs, dashed, draw=black!50, fill=gray!5, right=5pt of x31, yshift=20pt] {\textcolor{black!50}{$\xv^2$}};
    \node (x33) [obs, right=5pt of x32, yshift=20pt] {$\xv^3$};
    \node (x34) [obs, right=5pt of x33, yshift=20pt] {$\xv^4$};
    \node (x35) [obs, right=5pt of x34, yshift=20pt] {$\xv^5$};

    \node (h31) [det, above=10pt of x33] {$\hv^1_3$};
    \node (h32) [det, above=10pt of x34] {$\hv^2_3$};
    \node (h33) [det, above=10pt of x35] {$\hv^3_3$};

    \node (g35) [latent, draw=cmu_red,  above=10pt of h33] {\textcolor{cmu_red}{$\gv_3$}};

    \node (y35) [latent, above=10pt of g35] {$\yv^5$};

    \edge {x33} {h31};
    \edge {x34} {h32};
    \edge {x35} {h33};
    \edge {h31} {h32};
    \edge {h32} {h33};
    \edge {h33} {g35};
    \edge {g35} {y35};

    % Gaussian random field
    \edge[-, draw=cmu_red, line width=1.5pt] {g13} {g24};
    \edge[-, draw=cmu_red, line width=1.5pt] {g24} {g35};
  \end{tikzpicture}
  \caption{}\label{fig:gp-lstm}
\end{subfigure}
\hfill
\begin{subfigure}[b]{0.22\textwidth}
  \begin{tikzpicture}
    % Redefine some styles
    \tikzstyle{latent} = [circle,draw=black,inner sep=1pt,
    minimum size=17pt, font=\fontsize{9}{9}\selectfont, node distance=1]

    \node (x) [obs] {$\overline{\xv}$};
    \node (g) [latent, draw=cmu_red, left=20pt of x, yshift=40pt] {\textcolor{cmu_red}{$\gv$}};
    \node (phi) [latent, left=20pt of g, yshift=-40pt] {$\phiv$};
    \node (y) [obs, above=30pt of g] {$\yv$};

    \edge {x} {g};
    \edge {g} {y};
    \edge {phi} {g};
  \end{tikzpicture}
  \caption{}\label{fig:gp-lstm-pgm}
\end{subfigure}
\caption{\small
(a) Graphical representation of a recurrent model (RNN/LSTM) that maps an input sequence to a target value in one-step-ahead prediction manner.
Shaded variables are observable, diamond variables denote deterministic dependence on the inputs.
(b) Graphical model for GP-LSTM with a time lag, $L = 3$, two training time points, $t = 3$ and $t = 4$, and a testing time point, $t = 5$.
Latent representations are mapped to the outputs through a Gaussian field (denoted in red) that globally correlates predictions.
Dashed variables represent data instances unused at the given time step.
(c) Graphical representation of a GP with a kernel structured with a parametric map, $\phiv$.%
}
\label{fig:graph_models}
\end{figure*}

\section{Learning recurrent kernels}\label{sec:recurrent-kernels}

Gaussian processes with different kernel functions correspond to different structured probabilistic models.
For example, some special cases of the Mat\'ern class of kernels induce models with Markovian structure~\citep{stein1999interpolation}.
To construct deep kernels with recurrent structure we transform the original input space with an LSTM network and build a kernel directly in the transformed space, as shown in Figure~\ref{fig:gp-lstm}.

In particular, let $L$ $\phiv : \Xc^L \mapsto \Hc$ be an arbitrary deterministic transformation of the input sequences into some latent space, $\Hc$.
Next, let $k : \Hc^2 \mapsto \Rb$ be a real-valued kernel defined on $\Hc$.
The decomposition of the kernel and the transformation, $\tilde k = k \circ \phiv$, is defined as
\begin{equation}
    \label{eq:recurrent_kernel}
    \tilde k(\overline{\xv}, \overline{\xv}^\prime) = k(\phiv(\overline{\xv}), \phiv(\overline{\xv}^\prime)), \text{ where } \overline{\xv}, \overline{\xv}^\prime \in \Xc^L, \text{ and } \tilde k : \left(\Xc^L\right)^2 \mapsto \Rb.
\end{equation}
It is trivial to show that $\tilde k(\overline{\xv}, \overline{\xv}^\prime)$ is a valid kernel defined on $\Xc^L$~\citep[Ch.~5.4.3]{mackay1998introduction}.
In addition, if $\phiv(\cdot)$ is represented by a neural network, the resulting model can be viewed as the same network, but with an additional GP-layer and the negative log marginal likelihood (NLML) objective function used instead of the standard mean squared error (MSE).

Input embedding is well-known in the Gaussian process literature~\citep[e.g.][]{mackay1998introduction,hinton2008using}.
Recently, \citet{wilson2016dkl, wilson2016stochastic} have successfully scaled the approach and demonstrated strong results in regression and classification tasks for kernels based on feedforward and convolutional architectures.
In this paper, we apply the same technique to learn kernels with recurrent structure by transforming input sequences with a recurrent neural network that acts as $\phiv(\cdot)$.
In particular, a (multi-layer) LSTM architecture is used to embed $L$ steps of the input time series into a single vector in the hidden space, $\Hc$.
For the embedding, as common, we use the last hidden vector produced by the recurrent network.
Note that however, any variations to the embedding (e.g., using other types of recurrent units, adding 1-dimensional pooling layers, or attention mechanisms) are all fairly straightforward\footnote{Details on the particular architectures used in our empirical study are discussed in the next section.}.
More generally, the recurrent transformation can be random itself (Figure~\ref{fig:graph_models}), which would enable direct modeling of uncertainty \emph{within} the recurrent dynamics, but would also require inference for $\phiv$~\citep[e.g., as in][]{mattos2015recurrent}.
In this study, we limit our consideration of random recurrent maps to only those induced by dropout.

Unfortunately, once the the MSE objective is substituted with NLML, it no longer factorizes over the data.
This prevents us from using the well-established stochastic optimization techniques for training our recurrent model.
In the case of feedforward and convolutional networks, \citet{wilson2016dkl} proposed to pre-train the input transformations and then fine-tune them by jointly optimizing the GP marginal likelihood with respect to hyperparameters and the network weights using \emph{full-batch} algorithms.
When the transformation is recurrent, stochastic updates play a key role.
Therefore, we propose a semi-stochastic block-gradient optimization procedure which allows mini-batching weight updates and fully joint training of the model from scratch.

\subsection{Optimization}\label{sec:optimization}
The negative log marginal likelihood of the Gaussian process has the following form:
\begin{equation}
  \lik(K) = \yv^\top (K_y + \sigma^2 I)^{-1}\yv + \log\det(K_y + \sigma^2 I) + \const,
\end{equation}
where $K_y + \sigma^2 I$ ($\overset{\Delta}{=} K$) is the Gram kernel matrix, $K_y$, is computed on $\{\phiv(\overline{\xv}_i)\}_{i=1}^N$ and implicitly depends on the base kernel hyperparameters, $\thetav$, and the parameters of the recurrent neural transformation, $\phiv(\cdot)$, denoted $W$ and further referred as the \emph{transformation hyperparameters}.
Our goal is to optimize $\lik$ with respect to both $\thetav$ and $W$.

The derivative of the NLML objective with respect to $\thetav$ is standard and takes the following form~\citep{williams2006gaussian}:
\begin{equation}
    \frac{\partial\lik}{\partial \thetav} = \frac{1}{2} \tr{\left[K^{-1} \yv\yv^\top K^{-1} - K^{-1}\right]\frac{\partial K}{\partial \thetav}},
\end{equation}
where $\partial K / \partial \thetav$ is depends on the kernel function, $k(\cdot, \cdot)$, and usually has an analytic form.
The derivative with respect to the $l$-th transformation hyperparameter, $W_l$, is as follows:\footnote{Step-by-step derivations are given in Appendix~\ref{sec:grad-comp}.}
\begin{equation}
\label{eq:dlik_dw}
\frac{\partial\lik}{\partial W_l} =\frac{1}{2} \sum_{i,j} \left(K^{-1} \yv\yv^\top K^{-1} - K^{-1}\right)_{ij}
\left\{\left(\frac{\partial k(\hv_i, \hv_j)}{\partial \hv_i}\right)^\top \frac{\partial \hv_i}{\partial W_l} + \left(\frac{\partial k(\hv_i, \hv_j)}{\partial \hv_j}\right)^\top \frac{\partial \hv_j}{\partial W_l}\right\},
\end{equation}
where $\hv_i = \phiv(\overline{\xv}_i)$ corresponds to the latent representation of the the $i$-th data instance.
Once the derivatives are computed, the model can be trained with any first-order or quasi-Newton optimization routine.
However, application of the stochastic gradient method---the \emph{de facto} standard optimization routine for deep recurrent networks---is not straightforward:
neither the objective, nor its derivatives factorize over the data\footnote{Cannot be represented as sums of independent functions of each data point.} due to the kernel matrix inverses, and hence convergence is not guaranteed.

\textbf{Semi-stochastic alternating gradient descent.}
Observe that once the kernel matrix, $K$, is \emph{fixed}, the expression, $\left(K^{-1} \yv\yv^\top K^{-1} - K^{-1}\right)$, can be precomputed on the full data and fixed.
Subsequently, Eq.~\eqref{eq:dlik_dw} turns into a weighted sum of independent functions of each data point.
This observation suggests that, given a \emph{fixed kernel matrix}, one could compute a stochastic update for $W$ on a mini-batch of training points by only using the corresponding sub-matrix of $K$.
Hence, we propose to optimize GPs with recurrent kernels in a semi-stochastic fashion, alternating between updating the kernel hyperparameters, $\thetav$, on the full data first, and then updating the weights of the recurrent network, $W$, using stochastic steps.
The procedure is given in Algorithm~\ref{alg:semi-stoch-alt-grad}.

Semi-stochastic alternating gradient descent is a special case of block-stochastic gradient iteration~\citep{xu2015block}.
While the latter splits the variables into arbitrary blocks and applies Gauss–Seidel type stochastic gradient updates to each of them, our procedure alternates between applying deterministic updates to $\thetav$ and stochastic updates to $W$ of the form $\thetav^{(t+1)} \leftarrow \thetav^{(t)} + \lambda_{\thetav}^{(t)} \bg_{\thetav}^{(t)}$ and $W^{(t+1)} \leftarrow W^{(t)} + \lambda_{W}^{(t)} \bg_{W}^{(t)}$\footnote{In principle, stochastic updates of $\thetav$ are also possible. As we will see next, we choose in practice to keep the kernel matrix fixed while performing stochastic updates. Due to sensitivity of the kernel to even small changes in $\thetav$, convergence of the fully stochastic scheme is fragile.}.
The corresponding Algorithm~\ref{alg:semi-stoch-alt-grad} is provably convergent for convex and non-convex problems under certain conditions.
The following theorem adapts results of~\citet{xu2015block} to our optimization scheme.

%!TEX root = ../16-498.tex

\begin{figure}[t!]
\begin{minipage}[t]{0.48\textwidth}
\vspace{0pt}
\begin{algorithm}[H]
    \caption{Semi-stochastic alternating gradient descent.}
    \label{alg:semi-stoch-alt-grad}
    \begin{algorithmic}[1]
        \INPUT
            Data -- $(X, \yv)$,
            kernel -- $k_{\thetav}(\cdot, \cdot)$, \\
            recurrent transformation -- $\phi_{\wv}(\cdot)$.
        \STATE Initialize $\thetav$ and $\wv$; compute initial $K$.
        \REPEAT
            \FORALL{mini-batches $X_b$ in $X$}
                \STATE  \textcolor{dark-red}{$\thetav \leftarrow \thetav + \mathrm{\texttt{update}}_{\thetav}(X, \thetav, K)$.} and \\
                $\wv \leftarrow \wv + \mathrm{\texttt{update}}_{\wv}(X_b, \wv, K)$.
                \STATE \textcolor{dark-red}{Update the kernel matrix, $K$.}
            \ENDFOR
        \UNTIL{Convergence}
        \OUTPUT Optimal $\thetav^*$ and $\wv^*$
    \end{algorithmic}
\end{algorithm}
\end{minipage}
\hfill
\begin{minipage}[t]{0.48\textwidth}
\vspace{0pt}
\begin{algorithm}[H]
    \caption{Semi-stochastic asynchronous gradient descent.}
    \label{alg:semi-stoch-grad-stale-kernel}
    \begin{algorithmic}[1]
        \INPUT
            Data -- $(X, \yv)$,
            kernel -- $k_{\thetav}(\cdot, \cdot)$, \\
            recurrent transformation -- $\phi_{\wv}(\cdot)$.
        \STATE Initialize $\thetav$ and $\wv$; compute initial $K$.
        \REPEAT
            \STATE \textcolor{dark-green}{$\thetav \leftarrow \thetav + \mathrm{\texttt{update}}_{\thetav}(X, \wv, \textcolor{dark-green}{K})$.}
            \FORALL{mini-batches $X_b$ in $X$}
                \STATE $\wv \leftarrow \wv + \mathrm{\texttt{update}}_{\wv}(X_b, \wv, \textcolor{dark-green}{K^{\mathrm{stale}}})$.
            \ENDFOR
            \STATE \textcolor{dark-green}{Update the kernel matrix, $K$.}
        \UNTIL{Convergence}
        \OUTPUT Optimal $\thetav^*$ and $\wv^*$
    \end{algorithmic}
\end{algorithm}
\end{minipage}
\end{figure}

\begin{theorem}[informal]\label{thm:theorem1}
Semi-stochastic alternating gradient descent converges to a fixed point when the learning rate, $\lambda_t$, decays as $\Theta(1/t^\frac{1 + \delta}{2})$ for any $\delta \in (0, 1]$.
\end{theorem}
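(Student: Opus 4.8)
The plan is to cast our semi-stochastic alternating scheme as an instance of the block-stochastic gradient iteration of \citet{xu2015block} and then invoke their convergence guarantee, after verifying that the structural hypotheses of that result hold in our setting. First I would fix notation: write $z = (\thetav, W)$ for the full parameter vector and let $F(z) = \lik(K(z))$ denote the (deterministic) negative log marginal likelihood. The key observation, already made in the text, is that for fixed $K$ the gradient $\partial \lik / \partial W_l$ in Eq.~\eqref{eq:dlik_dw} is a \emph{weighted sum over data instances} — specifically $\sum_i g_i(W)$ where each $g_i$ depends only on the $i$-th latent representation $\hv_i = \phiv(\overline{\xv}_i)$ and the $i$-th rows/columns of the precomputed matrix $K^{-1}\yv\yv^\top K^{-1} - K^{-1}$. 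Hence a mini-batch stochastic gradient $\bg_W^{(t)}$ computed from $X_b$ is an unbiased estimator of $\partial \lik / \partial W$ with bounded variance (under a mild boundedness assumption on the iterates, e.g.\ confining $z$ to a compact set, or assuming the recurrent activations and kernel derivatives are uniformly bounded, which holds for the bounded nonlinearities $\sigmoid{\cdot}$, $\tanhoid{\cdot}$ used in the LSTM). The update for $\thetav$ is exact (full-batch), so it is the degenerate "single-sample" case of a block update.

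Next I would check the remaining regularity hypotheses needed by~\citet{xu2015block}: (i) $F$ is differentiable with a locally Lipschitz gradient in each block — this follows because $K(z)$ is a smooth (indeed analytic in $\thetav$, and $C^\infty$ in $W$ for smooth activations) function of the parameters, $K + \sigma^2 I \succeq \sigma^2 I$ is uniformly invertible so $K^{-1}$ is smooth, and compositions/products of smooth maps are smooth; restricting to a bounded region makes the relevant constants uniform; (ii) $F$ is bounded below — true since $\lik$ is (up to the constant) $\yv^\top K^{-1} \yv + \log\det K$ with $K \succeq \sigma^2 I$, giving $\log\det K \ge n\log\sigma^2$ and $\yv^\top K^{-1}\yv \ge 0$, so $F$ is bounded below on the whole space; (iii) the stochastic gradients have bounded second moment, from the variance bound above; (iv) the step sizes $\lambda_t$ satisfy $\sum_t \lambda_t = \infty$ and $\sum_t \lambda_t^2 < \infty$ — and here the specific rate $\Theta(1/t^{(1+\delta)/2})$ with $\delta \in (0,1]$ is chosen precisely so that $(1+\delta)/2 \in (1/2, 1]$, whence $\sum \lambda_t = \infty$ (exponent $\le 1$) and $\sum \lambda_t^2 < \infty$ (exponent $> 1/2$, so $2\cdot(1+\delta)/2 = 1+\delta > 1$). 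With these in hand, Theorem~2.8 (or the relevant non-convex statement) of~\citet{xu2015block} applies and yields that $\liminf_t \|\nabla F(z^{(t)})\| = 0$ almost surely, i.e.\ convergence to a stationary (fixed) point; in the convex case one additionally gets convergence of the objective value. I would state the formal theorem with the boundedness assumption made explicit and then present this reduction as the proof.

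The main obstacle I anticipate is \textbf{justifying the boundedness / bounded-variance conditions without artificially constraining the iterates.} The clean route is to assume (or argue) that the trajectory remains in a compact set — which is plausible because $F$ has compact sublevel sets once one adds the mild coercivity that $\log\det(K+\sigma^2 I) \to \infty$ as hyperparameters degenerate, but this needs care for the transformation weights $W$, where $F$ need not be coercive. A secondary subtlety is the interaction between the two blocks: in the alternating scheme $\thetav$ is updated \emph{before} the stochastic $W$-sweep and $K$ is refreshed, so one must confirm that the "stale kernel" used across the inner mini-batch loop introduces only a bias that vanishes as $\lambda_t \to 0$ (the kernel changes by $O(\lambda_t)$ per step). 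This is exactly the kind of Gauss–Seidel-with-delay situation that~\citet{xu2015block} accommodate, so I expect the bookkeeping to go through, but spelling out that the inner-loop staleness error is dominated by $\sum_t \lambda_t^2 < \infty$ is where the real work lies.
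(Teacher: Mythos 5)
Your proposal is correct and follows essentially the same route as the paper: the paper's Appendix C.1 likewise reduces Algorithm~\ref{alg:semi-stoch-alt-grad} to the block-stochastic framework of \citet{xu2015block}, using the same key observation that with $K$ fixed the mini-batch gradient in $W$ is an unbiased estimator (their Lemma~1), together with bounded-variance, Lipschitz-gradient, lower-boundedness, and bounded-iterate assumptions (their Assumptions~1--3) and the step-size summability conditions that your exponent calculation $(1+\delta)/2\in(1/2,1]$ delivers. The only differences are cosmetic: the paper handles your boundedness worry by simply assuming $\|\thetav^t\|^2+\Expec\|W^t\|^2\le\rho^2$, re-derives the two-block descent inequality rather than citing the theorem verbatim (to exploit that the $\thetav$-update is deterministic), and defers the stale-kernel issue you raise to Theorem~2, since in Algorithm~\ref{alg:semi-stoch-alt-grad} the kernel is refreshed every mini-batch.
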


Applying alternating gradient to our case has a catch: the kernel matrix (and its inverse) has to be updated each time $W$ and $\thetav$ are changed, \emph{i.e.}, on every mini-batch iteration (marked red in Algorithm~\ref{alg:semi-stoch-alt-grad}).
Computationally, this updating strategy defeats the purpose of stochastic gradients because we have to use the entire data on each step.
To deal with the issue of computational efficiency, we use ideas from asynchronous optimization.

\textbf{Asynchronous techniques.}
One of the recent trends in parallel and distributed optimization is applying updates in an asynchronous fashion~\citep{agarwal2011distributed}.
Such strategies naturally require some tolerance to delays in parameter updates~\citep{langford2009slow}.
In our case, we modify Algorithm~\ref{alg:semi-stoch-alt-grad} to allow \emph{delayed kernel matrix updates}.

The key observation is very intuitive: when the stochastic updates of $W$ are small enough, $K$ does not change much between mini-batches, and hence we can perform multiple stochastic steps for $W$ before re-computing the kernel matrix, $K$, and still converge.
For example, $K$ may be updated once at the end of each pass through the entire data (see Algorithm~\ref{alg:semi-stoch-grad-stale-kernel}).
To ensure convergence of the algorithm, it is important to strike the balance between (a) the learning rate for $W$ and (b) the frequency of the kernel matrix updates.
The following theorem provides convergence results under certain conditions.

\begin{theorem}[informal]\label{thm:theorem2}
Semi-stochastic gradient descent with $\tau$-delayed kernel updates converges to a fixed point when the learning rate, $\lambda_t$, decays as $\Theta(1/\tau t^\frac{1 + \delta}{2})$ for any $\delta \in (0, 1]$.
\end{theorem}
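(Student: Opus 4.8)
The plan is to recognize Algorithm~\ref{alg:semi-stoch-grad-stale-kernel} as an \emph{inexact} block-stochastic gradient iteration in the sense of~\citet{xu2015block}, in which the only source of inexactness is that the $W$-gradient is evaluated with a stale kernel matrix, and then to show that this inexactness has size $O(\tau\lambda_t)$, which is small enough that the convergence argument behind Theorem~\ref{thm:theorem1} still goes through. Throughout I would assume the same standing conditions that make the $\tau = 1$ case work: $\lik$ is continuously differentiable with a Lipschitz gradient on the region visited by the iterates; the mini-batch estimators of Eq.~\eqref{eq:dlik_dw} are unbiased with uniformly bounded second moment; and $\sigma^2 > 0$, so that $K = K_y + \sigma^2 I \succeq \sigma^2 I$ uniformly and hence the maps $K \mapsto K^{-1}$ and $K \mapsto K^{-1}\yv\yv^\top K^{-1} - K^{-1}$ are Lipschitz on the relevant set (together with boundedness of the iterates and of the kernel derivatives $\partial k/\partial\hv$, $\partial\hv/\partial W$, exactly as in the cited analysis).

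First I would write side by side the update the algorithm \emph{would} perform if the kernel were refreshed every step---with stochastic gradient $\bg_W^{(t)}$, the mini-batch form of Eq.~\eqref{eq:dlik_dw}---and the update it \emph{actually} performs, which is the same but with $K^{-1}\yv\yv^\top K^{-1} - K^{-1}$ replaced by the analogous quantity built from $K^{\mathrm{stale}}$. Let $e^{(t)}$ denote the difference of these two gradients. I then bound $\|e^{(t)}\|$ in two steps: (i) at most $\tau$ stochastic steps, each of size $\lambda_s\|\bg_W^{(s)}\| = O(\lambda_s)$, separate $W^{(t)}$ from $W^{\mathrm{stale}}$, so monotonicity of $\lambda_s$ gives $\|W^{(t)} - W^{\mathrm{stale}}\| = O(\tau\lambda_t)$, and Lipschitzness of $W \mapsto K$ then gives $\|K - K^{\mathrm{stale}}\| = O(\tau\lambda_t)$; (ii) Lipschitzness of $K \mapsto K^{-1}\yv\yv^\top K^{-1} - K^{-1}$ on $\{K \succeq \sigma^2 I\}$ propagates this to $\|e^{(t)}\| = O(\tau\lambda_t)$. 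The actual recursion is therefore $W^{(t+1)} = W^{(t)} + \lambda_t(\bg_W^{(t)} + e^{(t)})$ with a perturbation of norm $O(\tau\lambda_t)$, while the $\thetav$-block is unperturbed because its update uses the freshly recomputed $K$.

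Next I would check the two step-size conditions the inexact convergence result needs against the schedule $\lambda_t = \Theta(1/(\tau t^{(1+\delta)/2}))$. The divergence condition $\sum_t \lambda_t = \infty$ holds since $(1+\delta)/2 \le 1$. The summability condition $\sum_t \lambda_t^2 < \infty$ holds, and the accumulated-bias condition $\sum_t \lambda_t\|e^{(t)}\| = O(\sum_t \tau\lambda_t^2) < \infty$ holds because $\tau\lambda_t^2 = \Theta(1/(\tau t^{1+\delta}))$ and $1 + \delta > 1$. This is precisely where the explicit $1/\tau$ factor in the rate earns its keep: it keeps the extra $O(\tau\lambda_t)$ bias, once multiplied by $\lambda_t$, inside the summable regime, and it collapses to the Theorem~\ref{thm:theorem1} schedule at $\tau = 1$. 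With both conditions satisfied, the standard supermartingale argument for perturbed stochastic gradient descent---the one~\citet{xu2015block} use for block-stochastic iterations, now carrying the summable perturbation $e^{(t)}$---yields $\liminf_t \|\nabla\lik(\thetav^{(t)}, W^{(t)})\| = 0$ and convergence of $\{(\thetav^{(t)}, W^{(t)})\}$ to the set of fixed points.

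The step I expect to be the main obstacle is part (i) of the error bound: making $\|W^{(t)} - W^{\mathrm{stale}}\| = O(\tau\lambda_t)$ precise and uniform, with a constant that does not secretly depend on $\tau$. One must justify bounding a block of $\tau$ consecutive step sizes by $\tau\lambda_{t-\tau}$ and then by $O(\tau\lambda_t)$, which needs monotonicity of $\lambda_t$ plus a mild regularity condition such as $\lambda_{t-\tau}/\lambda_t = O(1)$ for fixed $\tau$---automatic for polynomially decaying rates. One also has to confirm that the mini-batch noise entering $e^{(t)}$ does not inflate its second moment beyond $O(\tau^2\lambda_t^2)$, which again follows from the bounded-second-moment assumption on $\bg_W$. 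Everything else is a routine transcription of the block-stochastic gradient analysis, with $\tau = 1$ reproducing Theorem~\ref{thm:theorem1}.
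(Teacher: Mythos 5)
Your proposal follows essentially the same route as the paper's Appendix~\ref{sec:convergence-alg-2}: bound the drift $\|W^t - W^{t-\tau}\|$ by the sum of at most $\tau$ stochastic steps, push it through Lipschitz assumptions on the recurrent map and the kernel (and the lower-bounded spectrum of $K$, which the paper states as Assumption~\ref{ass:A6} and you obtain from $K \succeq \sigma^2 I$) to get a perturbation bound on $K^{-1}\yv\yv^\top K^{-1} - K^{-1}$ (the paper's Lemma~\ref{lem:kernel-perturb-bound}), and then re-run the Theorem~\ref{thm:alg-1-convergence} argument with the inflated gradient-error bound. Your treatment of the bias as an explicitly $O(\tau\lambda_t)$-decaying, summable perturbation is in fact slightly more careful than the paper's constant-$D$ bound, but the idea and the role of the $1/\tau$ factor in the step size are the same.
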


\noindent Formal statements, conditions, and proofs for Theorems~\ref{thm:theorem1}~and~\ref{thm:theorem2} are given in Appendix~\ref{sec:convergence-alg-2}.

\textbf{Why stochastic optimization?}
GPs with recurrent kernels can be also trained with full-batch gradient descent, as proposed by~\citet{wilson2016dkl}.
However, stochastic gradient methods have been proved to attain better generalization~\citep{hardt2015train} and often demonstrate superior performance in deep and recurrent architectures~\citep{wilson2003general}.
Moreover, stochastic methods are `online', i.e., they update model parameters based on subsets of an incoming data stream, and hence can scale to very large datasets.
In our experiments, we demonstrate that GPs with recurrent kernels trained with Algorithm~2 converge faster (i.e., require fewer passes through the data) and attain better performance than if trained with full-batch techniques.

\textbf{Stochastic variational inference.}
Stochastic variational inference (SVI) in Gaussian processes~\citep{hensman2013gaussian} is another viable approach to enabling stochastic optimization for GPs with recurrent kernels.
Such method would optimize a variational lower bound on the original objective that factorizes over the data by construction.  Recently, \citet{wilson2016stochastic} developed such a stochastic variational approach in the context of deep kernel learning.
Note that unlike all previous existing work, our proposed approach does not require a variational approximation to the marginal likelihood to perform mini-batch training of Gaussian processes.

\subsection{Scalability}\label{sec:scalability}
Learning and inference with Gaussian processes requires solving a linear system involving an $n \times n$ kernel matrix, $K^{-1} {y}$, and computing a log determinant over $K$.
These operations typically require $\Ocal(n^3)$ computations for $n$ training data points, and $\Ocal(n^2)$ storage.
In our approach, scalability is achieved through semi-stochastic training and structure-exploiting inference.
In particular, asynchronous semi-stochastic gradient descent reduces both the total number of passes through the data required for the model to converge and the number of calls to the linear system solver; exploiting the structure of the kernels significantly reduces the time and memory complexities of the linear algebraic operations.

More precisely, we replace all instances of the covariance matrix $K_\yv$ with $W K_{U,U} W^{\top}$, where $W$ is a sparse interpolation matrix, and $K_{U,U}$ is the covariance matrix evaluated over $m$ latent inducing points, which decomposes into a Kronecker product of circulant matrices~\citep{wilson2015kissgp,wilson2015msgp}.
This construction makes inference and learning scale as $\Ocal(n)$ and test predictions be $\Ocal(1)$, while preserving model structure.
For the sake of completeness, we provide an overview of the underlying algebraic machinery in Appendix~\ref{sec:msgp}.

At a high level, because $W$ is sparse and $K_{U,U}$ is structured it is possible to take extremely fast matrix vector multiplications (MVMs) with the approximate covariance matrix $K_{X,X}$.
One can then use methods such as linear conjugate gradients, which only use MVMs, to efficiently solve linear systems.
MVM or scaled eigenvalue approaches~\citep{wilson2015kissgp,wilson2015msgp} can also be used to efficiently compute the log determinant and its derivatives.
Kernel interpolation~\citep{wilson2015msgp} also enables fast predictions, as we describe further in the Appendix.

%!TEX root = ../16-498.tex

%!TEX root = ../16-498.tex

\begin{table*}[t!]
\centering
\caption{\small
Statistics for the data used in experiments.
SNR was determined by assuming a certain degree of smoothness of the signal, fitting kernel ridge regression with RBF kernel to predict the targets from the input time series, and regarding the residuals as the noise.
Tasks with low average correlation between inputs and targets and lower SNR are harder prediction problems.}
\label{tab:datasets}
\vspace{-1ex}
\small
\begin{tabular}{llrrrrr}
\toprule
\textbf{Dataset} & \textbf{Task} & \textbf{\# time steps} & \textbf{\# dim} & \textbf{\# outputs} & \textbf{Abs. corr.} & \textbf{SNR} \\
\midrule
\textit{Drives}     & \multirow{2}{*}{system ident.}
                    & 500       & 1     & 1     & 0.7994    & 25.72 \\
\textit{Actuator}   &
                    & 1,024     & 1     & 1     & 0.0938    & 12.47 \\
\midrule
\multirow{2}{*}{\textit{GEF}}
                    & power load
                    & 38,064    & 11    & 1     & 0.5147    & 89.93 \\
                    & wind power
                    & 130,963   & 16    & 1     & 0.1731    & 4.06 \\
\midrule
\multirow{4}{*}{\textit{Car}}
                    & speed
                    & \multirow{4}{*}{932,939}
                                & 6     & 1     & 0.1196    & 159.33 \\
                    & gyro yaw
                    &           & 6     & 1     & 0.0764    & 3.19 \\
                    & lanes
                    &           & 26    & 16    & 0.0816    & --- \\
                    & lead vehicle
                    &           & 9     & 2     & 0.1099    & --- \\
\bottomrule
\end{tabular}
\vspace{-1ex}
\end{table*}

\section{Experiments}\label{sec:experiments}

We compare the proposed Gaussian processes with recurrent kernels based on RNN and LSTM architectures (GP-RNN/LSTM) with a number of baselines on datasets of various complexity and ranging in size from hundreds to almost a million of time points.
For the datasets with more than a few thousand points, we use a massively scalable version of GPs (see Section~\ref{sec:scalability}) and demonstrate its scalability during inference and learning.
We carry out a number of experiments that help to gain empirical insights about the convergence properties of the proposed optimization procedure with delayed kernel updates.
Additionally, we analyze the regularization properties of GP-RNN/LSTM and compare them with other techniques, such as dropout.
Finally, we apply the model to the problem of lane estimation and lead vehicle position prediction, both critical in autonomous driving applications.

\subsection{Data and the setup}

Below, we describe each dataset we used in our experiments and the associated prediction tasks.
The essential statistics for the datasets are summarized in Table~\ref{tab:datasets}.

\textbf{System identification.}
In the first set of experiments, we used publicly available nonlinear system identification datasets: \textit{Actuator}\footnote{\url{http://www.iau.dtu.dk/nnbook/systems.html}} \citep{sjoberg1995nonlinear} and \textit{Drives}\footnote{\url{http://www.it.uu.se/research/publications/reports/2010-020/NonlinearData.zip}.}~\citep{wigren2010input}.
Both datasets had one dimensional input and output time series.
\textit{Actuator} had the size of the valve opening as the input and the resulting change in oil pressure as the output.
\textit{Drives} was from a system with motors that drive a pulley using a flexible belt; the input was the sum of voltages applied to the motors and the output was the speed of the belt.

\textbf{Smart grid data\footnote{The smart grid data were taken from Global Energy Forecasting Kaggle competitions organized in 2012.}.}
We considered the problem of forecasting for the smart grid that consisted of two tasks (Figure~\ref{fig:gef}).
The first task was to predict power load from the historical temperature data.
The data had 11 input time series coming from hourly measurements of temperature on 11 zones and an output time series that represented the cumulative hourly power load on a U.S. utility.
The second task was to predict power generated by wind farms from the wind forecasts.
The data consisted of 4 different hourly forecasts of the wind and hourly values of the generated power by a wind farm.
Each wind forecast was a 4-element vector that corresponded to \textit{zonal component}, \textit{meridional component}, \textit{wind speed} and \textit{wind angle}.
In our experiments, we concatenated the 4 different 4-element forecasts, which resulted in a 16-dimensional input time series.

%!TEX root = ../16-498.tex

\begin{figure}[t!]
\centering
\includegraphics[width=0.485\textwidth]{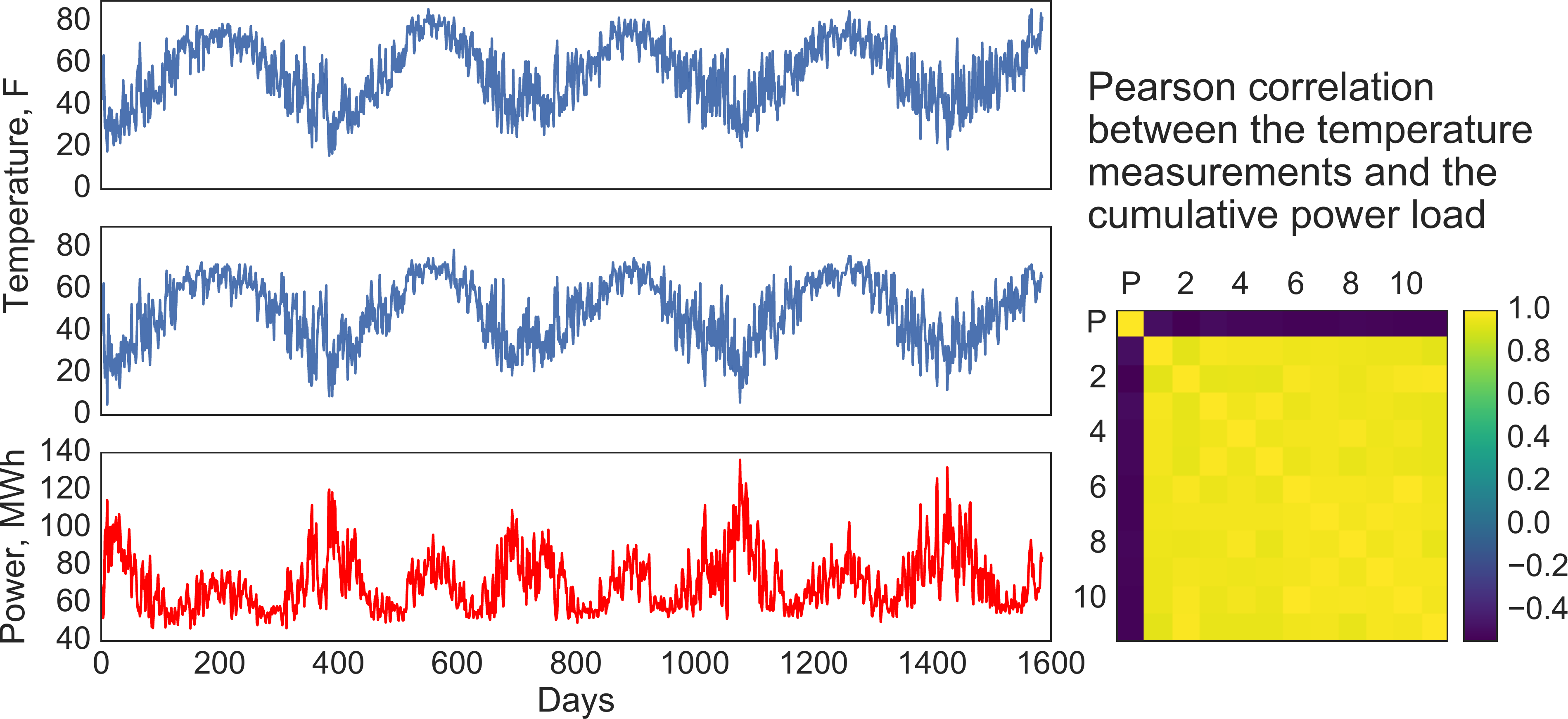}
\hfill
\includegraphics[width=0.485\textwidth]{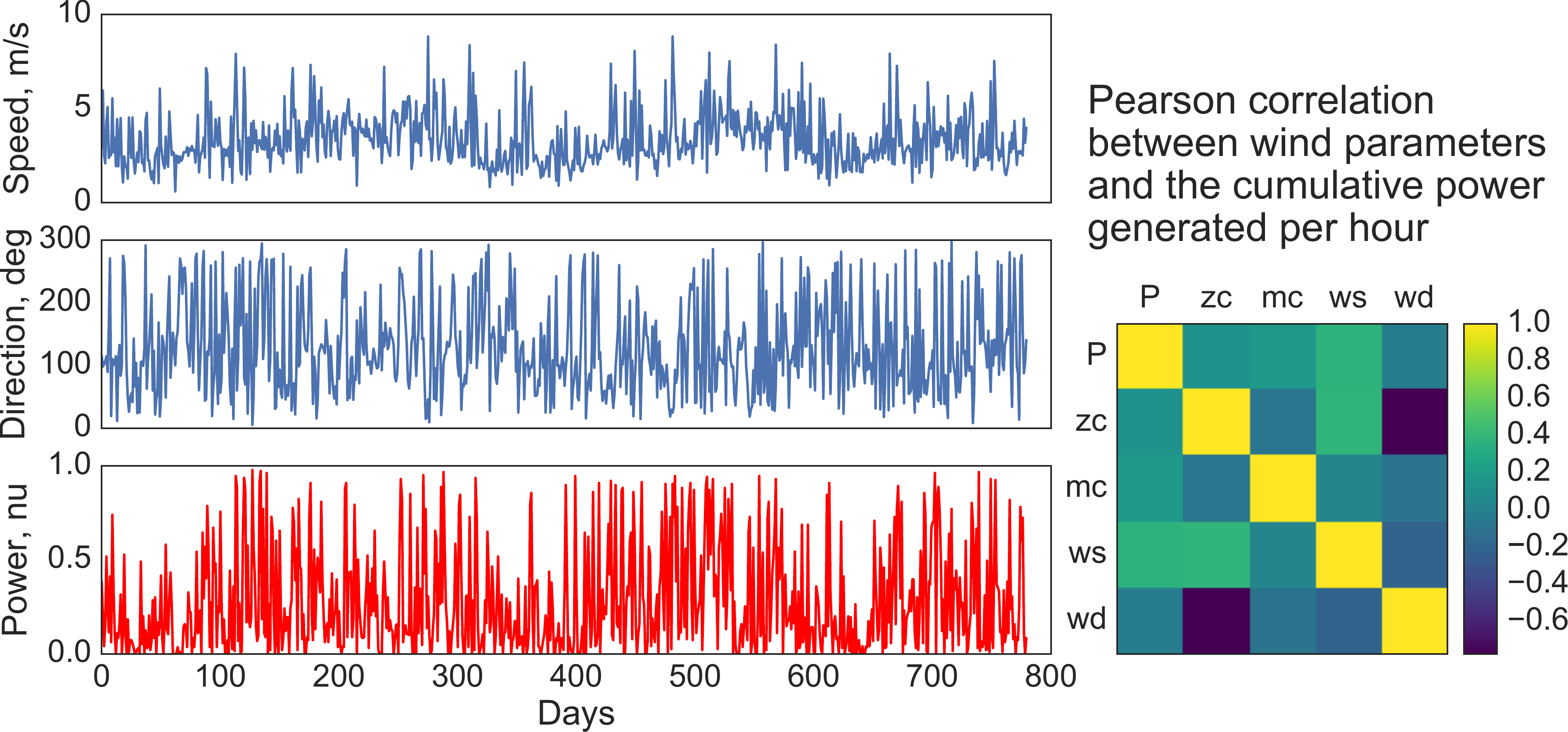}
\caption{
\textbf{Left:} Visualization of the \textit{GEF-power} time series for two zones and the cumulative load with the time resolution of 1 day.
Cumulative power load is generally negatively correlated with the temperature measurements on all the zones.
\textbf{Right:} Visualization of the \textit{GEF-wind} time series with the time resolution of 1 day.}
\label{fig:gef}
\end{figure}

\textbf{Self-driving car dataset\footnote{The dataset is proprietary.
It was released in part for public use under the Creative Commons Attribution 3.0 license: \url{http://archive.org/details/comma-dataset}.
More about the self-driving car: \url{http://www.bloomberg.com/features/2015-george-hotz-self-driving-car/}.
}.}
One of the main target applications of the proposed model is prediction for autonomous driving.
We considered a large dataset coming from sensors of a self-driving car that was recorded on two car trips with discretization of $10\, ms$.
The data featured two sets of GPS ECEF locations, ECEF velocities, measurements from a fiber-optic gyro compass, LIDAR, and a few more time series from a variety of IMU sensors.
Additionally, locations of the left and right lanes were extracted from a video stream for each time step as well as the position of the lead vehicle from the LIDAR measurements.
We considered the data from the first trip for training and from the second trip for validation and testing.
A visualization of the car routes with 25 second discretization in the ENU coordinates are given in Figure~\ref{fig:car-data}.
We consider four tasks, the first two of which are more of proof-of-concept type variety, while the final two are fundamental to good performance for a self-driving car:
\begin{enumerate}[itemsep=1pt, topsep=1pt]
\item Speed prediction from noisy GPS velocity estimates and gyroscopic inputs.
\item Prediction of the angular acceleration of the car from the estimates of its speed and steering angle.
\item Point-wise prediction of the lanes from the estimates at the previous time steps, and estimates of speed, gyroscopic and compass measurements.
\item Prediction of the lead vehicle location from its location at the previous time steps, and estimates of speed, gyroscopic and compass measurements.
\end{enumerate}
We provide more specific details on the smart grid data and self-driving data in Appendix~\ref{sec:data-details}.\\[-2ex]

%!TEX root = ../16-498.tex

\begin{figure}[t!]
\begin{adjustbox}{valign=t}
\begin{minipage}{0.65\textwidth}
\centering
\includegraphics[width=\textwidth]{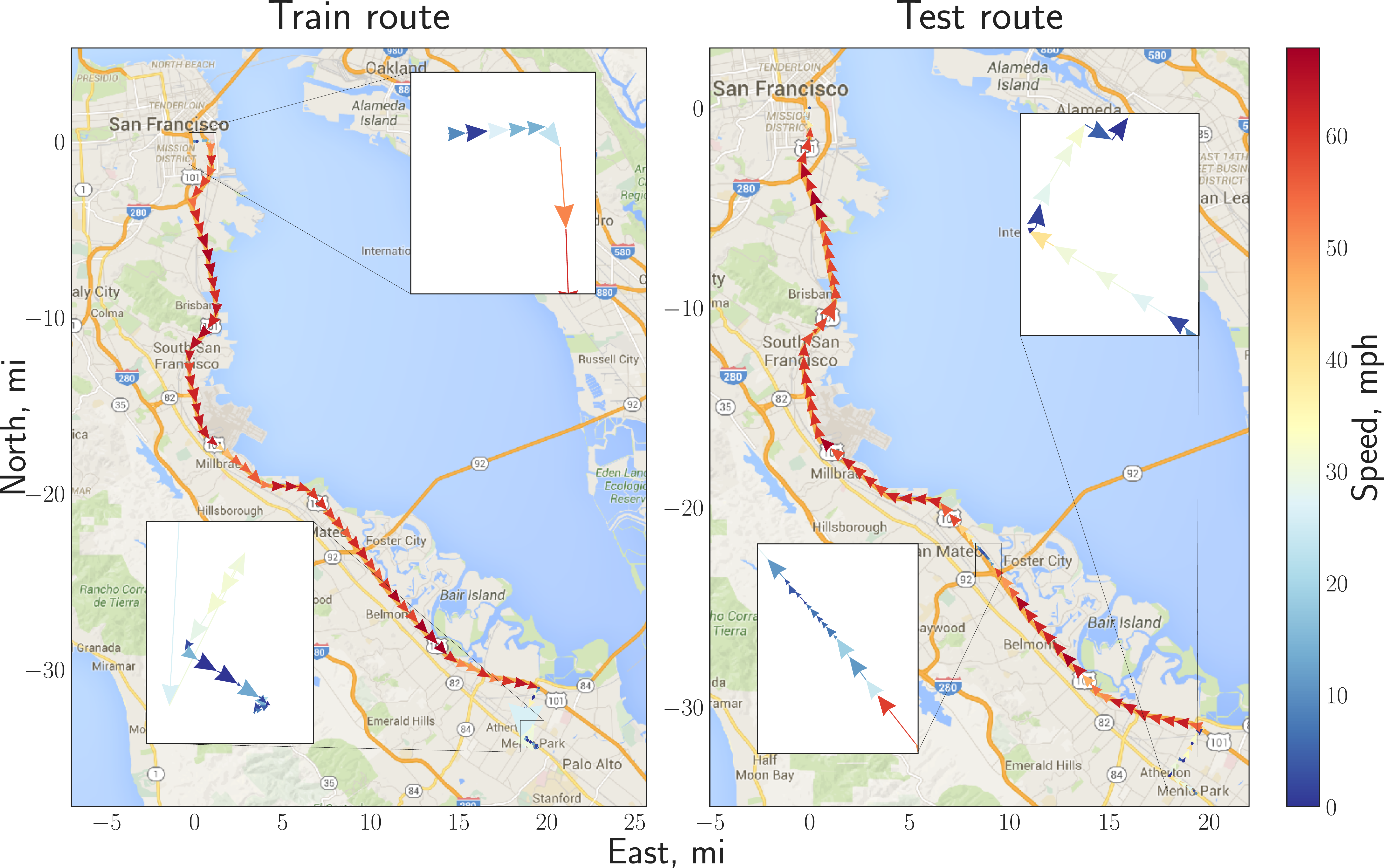}
\end{minipage}
\end{adjustbox}
~
\begin{adjustbox}{valign=t}
\begin{minipage}{0.34\textwidth}
\centering
\bgroup
\def\arraystretch{1.3}%  1 is the default, change whatever you need
\scriptsize
\begin{tabular}{lrr}
\toprule
& \textbf{Train} & \textbf{Test} \\
\midrule
Time & 42.8 min & 46.5 min \\
\midrule
\multicolumn{3}{l}{\textbf{Speed of the self-driving car}} \\
Min & 0.0 mph & 0.0 mph \\
Max & 80.3 mph & 70.1 mph \\
Average & 42.4 mph & 38.4 mph \\
Median & 58.9 mph & 44.8 mph \\
\midrule
\multicolumn{3}{l}{\textbf{Distance to the lead vehicle}} \\
Min & 23.7 m & 29.7 m \\
Max & 178.7 m & 184.7 m \\
Average & 85.4 m & 72.6 m \\
Median & 54.9 m & 53.0 m \\
\bottomrule
\end{tabular}
\egroup
\end{minipage}
\end{adjustbox}
\captionlistentry[table]{}
\captionsetup{labelformat=andtable}
\caption{
\textbf{Left:} Train and test routes of the self-driving car in the ENU coordinates with the origin at the starting location.
Arrows point in the direction of motion; color encodes the speed.
Insets zoom selected regions of the routes.
Best viewed in color.
\textbf{Right:} Summary of the data collected on the train and test routes.
}\label{fig:car-data}
\vspace{-2ex}
\end{figure}

\textbf{Models and metrics.}
We used a number of classical baselines: NARX~\citep{lin1996learning}, GP-NARX models~\citep{kocijan2005dynamic}, and classical RNN and LSTM architectures.
The kernels of our models, GP-NARX/RNN/LSTM, used the ARD base kernel function and were structured by the corresponding baselines.\footnote{GP-NARX is a special instance of our more general framework and we trained it using the proposed semi-stochastic algorithm.}
As the primary metric, we used root mean squared error (RMSE) on a held out set and additionally negative log marginal likelihood (NLML) on the training set for the GP-based models.

We train all the models to perform one-step-ahead prediction in an autoregressive setting, where targets at the future time steps are predicted from the input and target values at a fixed number of past time steps.
For the system identification task, we additionally consider the non-autoregressive scenario (i.e., mapping only input sequences to the future targets), where we are performing prediction in the \emph{free simulation mode}, and included recurrent Gaussian processes~\citep{mattos2015recurrent} in our comparison.
In this case, none of the future targets are available and the models have to re-use their own past predictions to produce future forecasts).

\textbf{A note on implementation.}
Recurrent parts of each model were implemented using \textit{Keras}\footnote{\url{http://www.keras.io}} library.
We extended \textit{Keras} with the Gaussian process layer and developed a backed engine based on the \textit{GPML} library\footnote{\url{http://www.gaussianprocess.org/gpml/code/matlab/doc/}}.
Our approach allows us to take full advantage of the functionality available in \textit{Keras} and \textit{GPML}, \emph{e.g.}, use automatic differentiation for the recurrent part of the model.
Our code is available at \url{http://github.com/alshedivat/kgp/}.

\subsection{Analysis}

This section discusses quantitative and qualitative experimental results.
We only briefly introduce the model architectures and the training schemes used in each of the experiments.
We provide a comprehensive summary of these details in Appendix~\ref{sec:architecture-details}.

\subsubsection{Convergence of the optimization}

To address the question of whether stochastic optimization of recurrent kernels is necessary and to assess the behavior of the proposed optimization scheme with delayed kernel updates, we conducted a number of experiments on the Actuator dataset (Figure~\ref{fig:optimization}).

%!TEX root = ../16-498.tex

\begin{figure}[t]
\centering
\includegraphics[width=\textwidth]{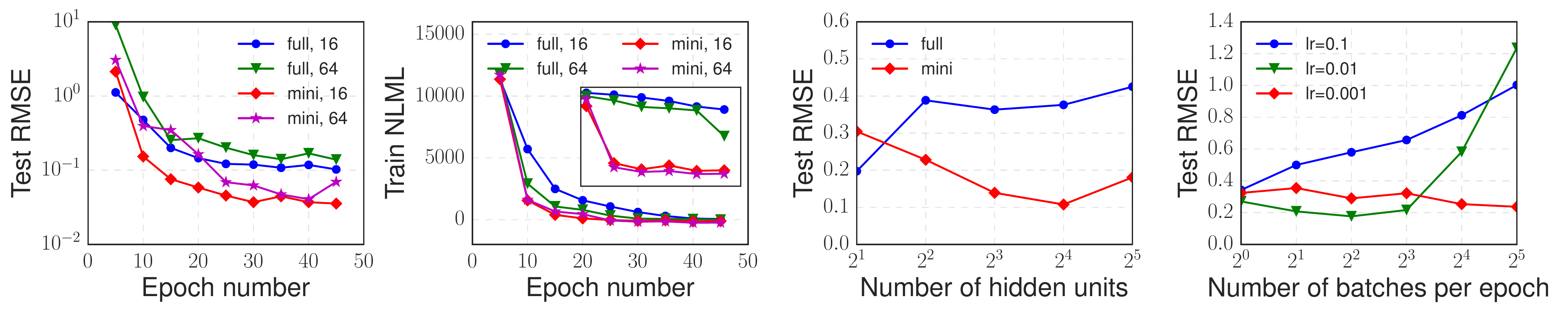}
\caption{\small
\emph{Two charts on the left:}
Convergence of the optimization in terms of RMSE on test and NLML on train.
The inset zooms the region of the plot right beneath it using log scale for the vertical axis.
\emph{full} and \emph{mini} denote full-batch and mini-batch optimization procedures, respectively, while 16 and 64 refer to models with the respective number of units per hidden layer.
\emph{Two charts on the right:}
Test RMSE for a given architecture trained with a specified method and/or learning rate.
}
\label{fig:optimization}
\end{figure}

First, we constructed two GP-LSTM models with 1 recurrent hidden layer and 16 or 64 hidden units and trained them with (non-stochastic) full-batch iterative procedure (similar to the proposal of~\citet{wilson2016dkl}) and with our semi-stochastic optimizer with delayed kernel updates (Algorithm~\ref{alg:semi-stoch-grad-stale-kernel}).
The convergence results are given on the first two charts.
Both in terms of the error on a held out set and the NLML on the training set, the models trained with mini-batches converged faster and demonstrated better final performance.

Next, we compared the two optimization schemes on the same GP-LSTM architecture with different sizes of the hidden layer ranging from 2 to 32.
It is clear from the third chart that, even though full-batch approach seemed to find a better optimum when the number of hidden units was small, the stochastic approach was clearly superior for larger hidden layers.

Finally, we compared the behavior of Algorithm~\ref{alg:semi-stoch-grad-stale-kernel} with different number of mini-batches used for each epoch (equivalently, the number of steps between the kernel matrix updates) and different learning rates.
The results are give on the last chart.
As expected, there is a fine balance between the number of mini-batches and the learning rate: if the number of mini-batches is large (\emph{i.e.}, the delay between the kernel updates becomes too long) while the learning rate is high enough, optimization does not converge; at the same time, an appropriate combination of the learning rate and the mini-batch size leads better generalization than the default batch approach of~\cite{wilson2016dkl}.

\subsubsection{Regression, Auto-regression, and Free Simulation}
In this set of experiments, our main goal is to provide a comparison between three different modes of one-step-ahead prediction, referred to as (i) regression, (ii) autoregression, and (iii) free simulation, and compare performance of our models with RGP---a classical RNN with every parametric layer substituted with a Gaussian process~\citep{mattos2015recurrent}---on the Actuator and Drives datasets.
The difference between the prediction modes consists in whether and how the information about the past targets is used.
In the regression scenario, inputs and targets are separate time series and the model learns to map input values at a number of past time points to a target value at a future point in time.
Autoregression, additionally, uses the \emph{true} past target values as inputs; in the free simulation mode, the model learns to map past inputs and its own past predictions to a future target.

In the experiments in autoregression and free simulation modes, we used short time lags, $L = 10$, as suggested by~\citet{mattos2015recurrent}.
In the regression mode, since the model does not build the recurrent relationships based on the information about the targets (or their estimates), it generally requires larger time lags that can capture the state of the dynamics.
Hence we increased the time lag to 32 in the regression mode.
More details are given in Appendix~\ref{sec:architecture-details}.

%!TEX root = ../16-498.tex

\begin{table}[t!]
\centering
\caption{\small
Average performance of the models in terms of RMSE on the system identification tasks.
The averages were computed over 5 runs; the standard deviation is given in the parenthesis.
Results for the RGP model are as reported by \citet{mattos2015recurrent}, available only for the free simulation.}
\label{tab:performance-sysid}
\scriptsize
\begin{tabular}{l|r|r|r|rrr}
\toprule
\multirow{2}{*}{} &
\multicolumn{3}{c|}{\textbf{Drives}} &
\multicolumn{3}{c}{\textbf{Actuator}} \\
& regression & auto-regression & free simulation & regression & auto-regression & free simulation \\
\midrule
\textbf{NARX}       & 0.33 (0.02) & 0.19 (0.03) & 0.38 (0.03) & 0.49 (0.05) & 0.18 (0.01) & 0.57 (0.04) \\
\textbf{RNN}        & 0.53 (0.02) & 0.17 (0.04) & 0.56 (0.03) & 0.56 (0.03) & 0.17 (0.01) & 0.68 (0.05) \\
\textbf{LSTM}       & 0.29 (0.02) & \textbf{0.14} (0.02) & 0.40 (0.02) & 0.40 (0.03) & 0.19 (0.01) & 0.44 (0.03) \\
\midrule
\textbf{GP-NARX}    & 0.28 (0.02) & 0.16 (0.04) & 0.28 (0.02) & 0.46 (0.03) & \textbf{0.14} (0.01) & 0.63 (0.04) \\
\textbf{GP-RNN}     & 0.37 (0.04) & 0.16 (0.03) & 0.45 (0.03) & 0.49 (0.02) & \textbf{0.15} (0.01) & 0.55 (0.04) \\
\textbf{GP-LSTM}    & \textbf{0.25} (0.02) & \textbf{0.13} (0.02) & 0.32 (0.03) & 0.36 (0.01) & \textbf{0.14} (0.01) & 0.43 (0.03) \\
\midrule
\textbf{RGP}        & --- & --- & \textbf{0.249} & --- & --- & \textbf{0.368} \\
\bottomrule
\end{tabular}
\end{table}

We present the results in Table~\ref{tab:performance-sysid}.
We note that GP-based architectures consistently yielded improved predictive performance compared to their vanilla deep learning counterparts on both of the datasets, in each mode.
Given the small size of the datasets, we attribute such behavior to better regularization properties of the negative log marginal likelihood loss function.
We also found out that when GP-based models were initialized with weights of pre-trained neural networks, they tended to overfit and give overly confident predictions on these tasks.
The best performance was achieved when the models were trained from a random initialization~\citep[contrary to the findings of][]{wilson2016dkl}.
In free simulation mode RGP performs best of the compared models.
This result is expected---RGP was particularly designed to represent and propagate uncertainty through a recurrent process.  Our framework focuses on using recurrence to build expressive kernels for regression on sequences.

The suitability of each prediction mode depends on the task at hand.
In many applications where the future targets become readily available as the time passes (e.g., power estimation or stock market prediction), the autoregression mode is preferable.
We particularly consider autoregressive prediction in the further experiments.

\subsubsection{Prediction for smart grid and self-driving car applications}
For both smart grid prediction tasks we used LSTM and GP-LSTM models with 48 hour time lags and were predicting the target values one hour ahead.
LSTM and GP-LSTM were trained with one or two layers and 32 to 256 hidden units.
The best models were selected on 25\% of the training data used for validation.
For autonomous driving prediction tasks, we used the same architectures but with 128 time steps of lag (1.28~$s$).
All models were regularized with dropout~\citep{srivastava2014dropout,gal2016theoretically}.
On both GEF and self-driving car datasets, we used the scalable version of Gaussian process (MSGP)~\citep{wilson2015msgp}.
Given the scale of the data and the challenge of nonlinear optimization of the recurrent models, we initialized the recurrent parts of GP-RNN and GP-LSTM with pre-trained weights of the corresponding neural networks.
Fine-tuning of the models was performed with Algorithm~\ref{alg:semi-stoch-grad-stale-kernel}.
The quantitative results are provided in Table~\ref{tab:performance-gef-car} and demonstrate that GPs with recurrent kernels attain the state-of-the-art performance.

%!TEX root = ../16-498.tex

\begin{table}[t!]
\centering
\caption{\small
Average performance of the best models in terms of RMSE on the GEF and Car tasks.
The averages were computed over 5 runs; the standard deviation is given in the parenthesis.}
\label{tab:performance-gef-car}
\scriptsize
\begin{tabular}{l|rr|rrrr}
\toprule
\multirow{2}{*}{} & \multicolumn{2}{c|}{\textbf{GEF}} & \multicolumn{4}{c}{\textbf{Car}} \\
& power load & wind power & speed & gyro yaw & lanes & lead vehicle \\
\midrule
\textbf{NARX}       & 0.54 (0.02) & 0.84 (0.01) & 0.114 (0.010) & 0.19 (0.01) & 0.13 (0.01) & 0.41 (0.02) \\
\textbf{RNN}        & 0.61 (0.02) & 0.81 (0.01) & 0.152 (0.012) & 0.22 (0.01) & 0.33 (0.02) & 0.44 (0.03) \\
\textbf{LSTM}       & 0.45 (0.01) & \textbf{0.77} (0.01) & 0.027 (0.008) & 0.13 (0.01) & 0.08 (0.01) & 0.40 (0.01) \\
\midrule
\textbf{GP-NARX}    & 0.78 (0.03) & 0.83 (0.02) & 0.125 (0.015) & 0.23 (0.02) & 0.10 (0.01) & 0.34 (0.02) \\
\textbf{GP-RNN}     & 0.24 (0.02) & 0.79 (0.01) & 0.089 (0.013) & 0.24 (0.01) & 0.46 (0.08) & 0.41 (0.02) \\
\textbf{GP-LSTM}    & \textbf{0.17} (0.02) & \textbf{0.76} (0.01) &  \textbf{0.019} (0.006) & \textbf{0.08} (0.01) & \textbf{0.06} (0.01) & \textbf{0.32} (0.02) \\
\bottomrule
\end{tabular}
\end{table}

%!TEX root = ../16-498.tex

\begin{figure}[t!]
\centering
\begin{subfigure}[b]{\textwidth}
    \includegraphics[width=\textwidth]{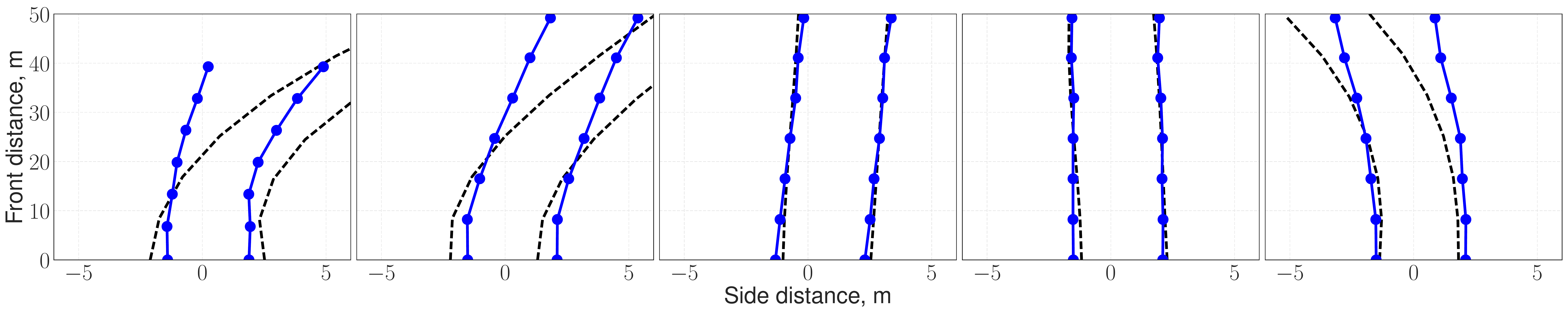}
    \includegraphics[width=\textwidth]{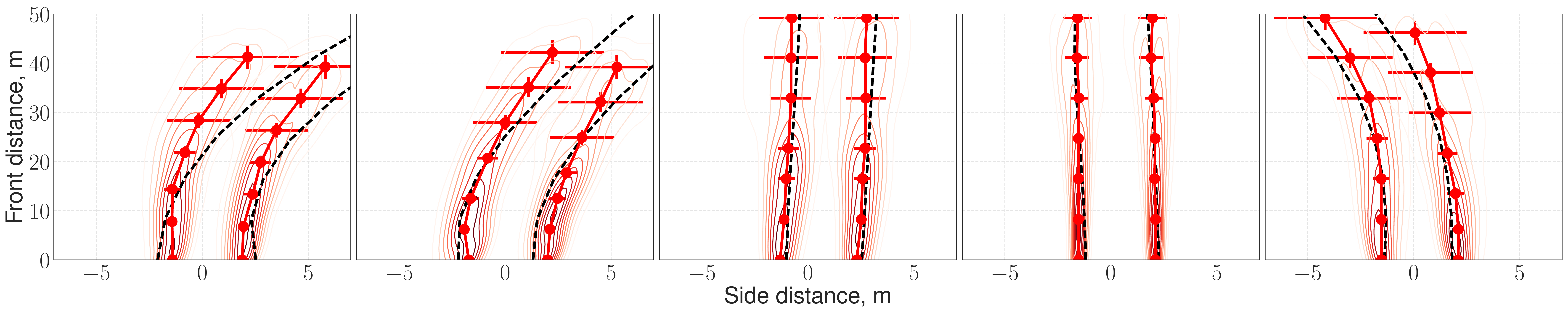}
    \caption{%
        Point-wise predictions of the lanes made by LSTM (upper) and by GP-LSTM (lower).
        Dashed lines correspond to the ground truth extracted from video sequences and used for training.
    }\label{fig:car-lanes}
\end{subfigure}
\begin{subfigure}[b]{\textwidth}
    \includegraphics[width=\textwidth]{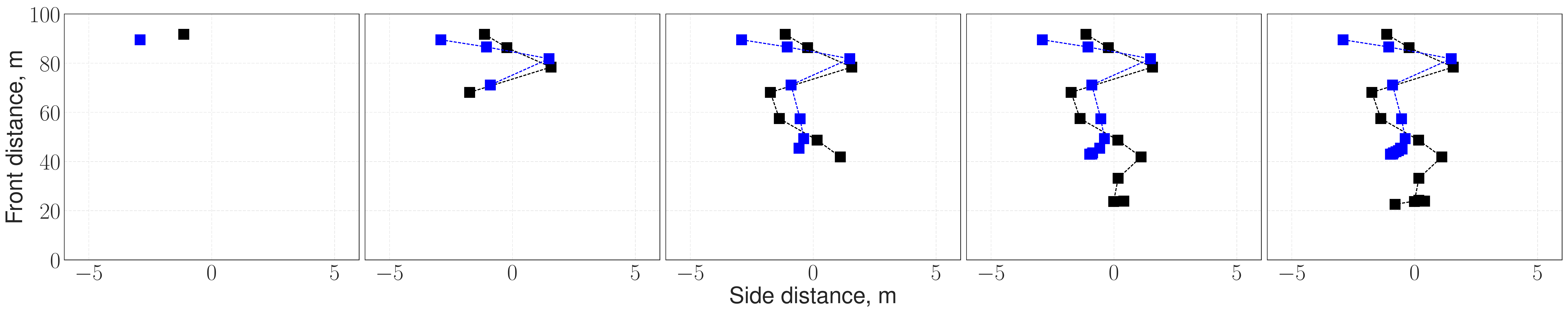}
    \includegraphics[width=\textwidth]{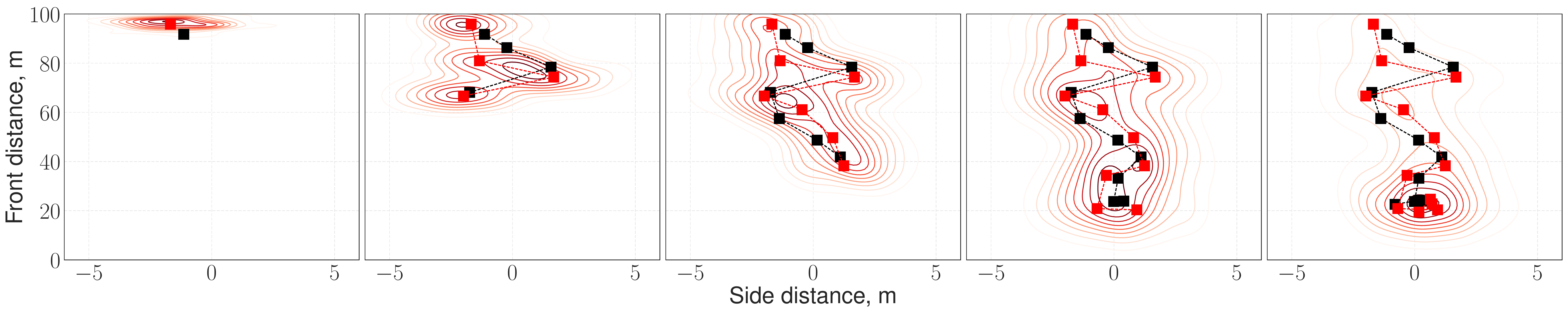}
    \caption{%
        LSTM (upper) and by GP-LSTM (lower) position predictions of the lead vehicle.
        Black markers and dashed lines are the ground truth; blue and red markers with solid lines correspond to predictions.
    }\label{fig:car-front-vehicle}
\end{subfigure}
\caption{%
Qualitative comparison of the LSTM and GP-LSTM predictions on self-driving tasks.
Predictive uncertainty of the GP-LSTM model is showed by contour plots and error-bars; the latter denote one standard deviation of the predictive distributions.
}\label{fig:car-qualitative}
\end{figure}

Additionally, we investigated convergence and regularization properties of LSTM and GP-LSTM models on the GEF-power dataset.
The first two charts of Figure~\ref{fig:performance} demonstrate that GP-based models are less prone to overfitting, even when the data is not enough.
The third panel shows that architectures with a particular number of hidden units per layer attain the best performance on the power prediction task.
An additional advantage of the GP-layers over the standard recurrent networks is that the best architecture could be identified based on the negative log likelihood of the model as shown on the last chart.

\newpage

Finally, Figure~\ref{fig:car-qualitative} qualitatively demonstrates the difference between the predictions given by LSTM vs. GP-LSTM on point-wise lane estimation (Figure~\ref{fig:car-lanes}) and the front vehicle tracking (Figure~\ref{fig:car-front-vehicle}) tasks.
We note that GP-LSTM not only provides a more robust fit, but also estimates the uncertainty of its predictions.
Such information can be further used in downstream prediction-based decision making, \emph{e.g.}, such as whether a self-driving car should slow down and switch to a more cautious driving style when the uncertainty is high.

%!TEX root = ../16-498.tex

\begin{figure}[t]
\centering
\includegraphics[width=\textwidth]{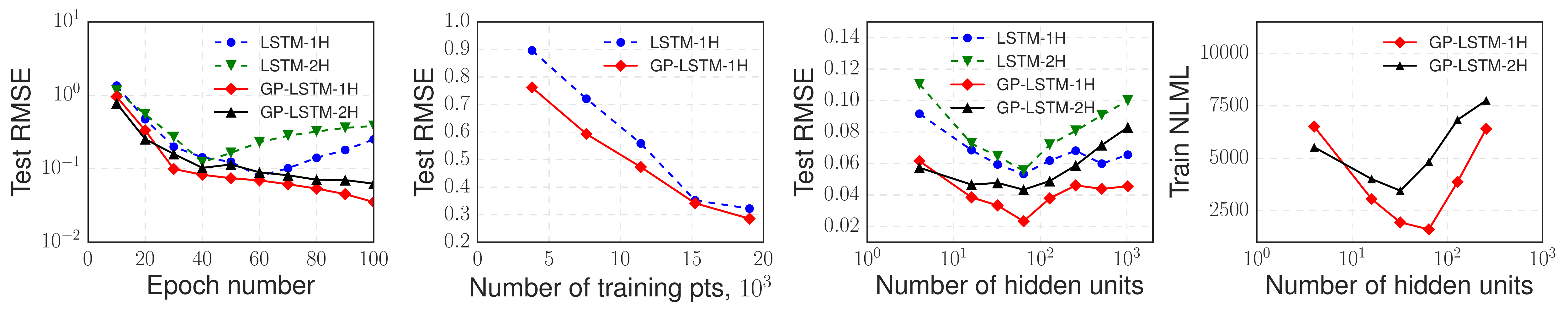}
\caption{\small
\emph{Left to right}: RMSE vs. the number of training points; RMSE vs. the number model parameters per layer; NLML vs. the number model parameters per layer for GP-based models.
All metrics are averages over 5 runs with different random initializations, computed on a held-out set.
}
\label{fig:performance}
\vspace{-1ex}
\end{figure}

%!TEX root = ../16-498.tex

\begin{figure}[t]
\centering
\includegraphics[width=\textwidth]{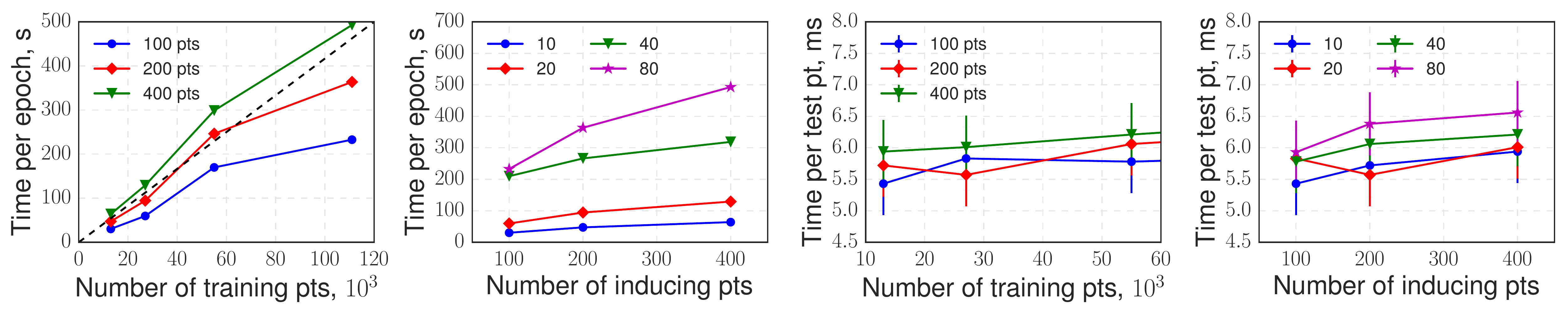}
\caption{\small
The charts demonstrate scalability of learning and inference of MSGP with an LSTM-based recurrent kernel.
Legends with points denote the number of inducing points used.
Legends with percentages denote the percentage of the training dataset used learning the model.
}
\label{fig:scalability}
\end{figure}

\subsubsection{Scalability of the model}
Following~\cite{wilson2015msgp}, we performed a generic scalability analysis of the MSGP-LSTM model on the car sensors data.
The LSTM architecture was the same as described in the previous section: it was transforming multi-dimensional sequences of inputs to a two-dimensional representation.
We trained the model for 10 epochs on 10\%, 20\%, 40\%, and 80\% of the training set with 100, 200, and 400 inducing points per dimension and measured the average training time per epoch and the average prediction time per testing point.
The measured time was the total time spent on both LSTM optimization and MSGP computations.
The results are presented in Figure~\ref{fig:scalability}.

The training time per epoch (one full pass through the entire training data) grows linearly with the number of training examples and depends linearly on the number of inducing points (Figure~\ref{fig:scalability}, two left charts).
Thus, given a fixed number of inducing points per dimension, the time complexity of MSGP-LSTM learning and inference procedures is linear in the number of training examples.
The prediction time per testing data point is virtually constant and does not depend on neither on the number of training points, nor on the number of inducing points (Figure~\ref{fig:scalability}, two right charts).

%!TEX root = ../16-498.tex

\section{Discussion}
We proposed a method for learning kernels with recurrent long short-term memory structure on sequences.
Gaussian processes with such kernels, termed the GP-LSTM, have the structure and learning biases of LSTMs, while retaining a probabilistic Bayesian nonparametric representation.
The GP-LSTM outperforms a range of alternatives on several sequence-to-reals regression tasks.
The GP-LSTM also works on data with low and high signal-to-noise ratios, and can be scaled to very large datasets, all with a straightforward, practical, and generally applicable model specification.
Moreover, the semi-stochastic scheme proposed in our paper is provably convergent and efficient in practical settings, in conjunction with structure exploiting algebra.
In short, the GP-LSTM provides a natural mechanism for Bayesian LSTMs, quantifying predictive uncertainty while harmonizing with the standard deep learning toolbox.  Predictive uncertainty is of high value in robotics applications, such as autonomous driving, and could also be applied to other areas such as financial modeling and computational biology.

There are several exciting directions for future research.
The GP-LSTM quantifies predictive uncertainty but does not model the propagation of uncertainty in the inputs through a recurrent structure.
Treating free simulation as a structured prediction problem and using online corrective algorithms, e.g., DAGGER~\citep{ross2011reduction}, are likely to improve performance of GP-LSTM in the free prediction mode.
This approach would not require explicitly modeling and propagating uncertainty through the recurrence and would maintain the high computational efficiency of our method.

Alternatively, it would be exciting to have a probabilistic treatment of all parameters of the GP-LSTM kernel, including all LSTM weights.
Such an extension could be combined with stochastic variational inference, to enable both classification and non-Gaussian likelihoods as in~\citet{wilson2016stochastic}, but also open the doors to stochastic gradient Hamiltonian Monte Carlo~\citep{chen2014stochastic} (SG-HMC) for efficient inference over kernel parameters.
Indeed, SG-HMC has recently been used for efficient inference over network parameters in the Bayesian GAN~\citep{bayesiangan}.
A Bayesian approach to marginalizing the weights of the GP-LSTM kernel would also provide a principled probabilistic mechanism for learning model hyperparameters.

One could relax several additional assumptions.
We modeled each output dimension with independent GPs that shared a recurrent transformation.
To capture the correlations between output dimensions, it would be promising to move to a multi-task formulation.
In the future, one could also learn the time horizon in the recurrent transformation, which could lead to major additional performance gains.

Finally, the semi-stochastic learning procedure naturally complements research in asynchronous optimization \citep[e.g.,][]{deisenroth2015distributed}.
In combination with stochastic variational inference, the semi-stochastic approach could be used for parallel kernel learning, side-stepping the independence assumptions in prior work.
We envision that such efforts for Gaussian processes will harmonize with current progress in Bayesian deep learning.

%!TEX root = ../16-498.tex

\section{Acknowledgements}
The authors thank Yifei Ma for helpful discussions and the anonymous reviewers for the valuable comments that helped to improve the paper.
This work was supported in part by NIH R01GM114311, AFRL/DARPA FA87501220324, and NSF IIS-1563887.

\clearpage
%!TEX root = ../16-498.tex

\appendix
\setcounter{equation}{0}
\renewcommand{\theequation}{\Alph{section}.\arabic{equation}}

\section{Massively scalable Gaussian processes}\label{sec:msgp}
Massively scalable Gaussian processes (MSGP)~\citep{wilson2015msgp} is a significant extension of the kernel interpolation framework originally proposed by~\citet{wilson2015kissgp}.
The core idea of the framework is to improve scalability of the inducing point methods~\citep{quinonero2005unifying} by (1) placing the virtual points on a regular grid, (2) exploiting the resulting Kronecker and Toeplitz structures of the relevant covariance matrices, and (3) do local cubic interpolation to go back to the kernel evaluated at the original points.
This combination of techniques brings the complexity down to $\Ocal(n)$ for training and $\Ocal(1)$ for each test prediction.
Below, we overview the methodology.
We remark that a major difference in philosophy between MSGP and many classical inducing point methods is that the points are \emph{selected and fixed} rather than optimized over.
This allows to use significantly more virtual points which typically results in a better approximation of the true kernel.

\subsection{Structured kernel interpolation}
Given a set of $m$ inducing points, the $n \times m$ cross-covariance matrix, $K_{X,U}$, between the training inputs, $X$, and the inducing points, $\Uv$, can be approximated as $\tilde K_{X,U} = W_X K_{U,U}$ using a (potentially sparse) $n \times m$ matrix of interpolation weights, $W_{X}$.
This allows to approximate $K_{\Xv,\Zv}$ for an arbitrary set of inputs $\Zv$ as $K_{\Xv,\Zv} \approx \tilde K_{X,U} W_{\Zv}^\top$.
For any given kernel function, $K$, and a set of inducing points, $\Uv$, \emph{structured kernel interpolation} (SKI) procedure~\citep{wilson2015kissgp} gives rise to the following approximate kernel:
\begin{equation}\label{eq:ski-kernel}
    K_\mathrm{SKI}(\xv, \zv) = W_{X} K_{U,U} W_{z}^\top,
\end{equation}
which allows to approximate $K_{X,X} \approx W_{X} K_{U,U} W_{X}^\top$.
\citet{wilson2015kissgp} note that standard inducing point approaches, such as subset of regression (SoR) or fully independent training conditional (FITC), can be reinterpreted from the SKI perspective.
Importantly, the efficiency of SKI-based MSGP methods comes from, first, a clever choice of a set of inducing points that allows to exploit algebraic structure of $K_{U,U}$, and second, from using very sparse local interpolation matrices.
In practice, local cubic interpolation is used~\citep{keys1981cubic}.

\subsection{Kernel approximations}
If inducing points, $U$, form a regularly spaced $P$-dimensional grid, and we use a stationary product kernel (e.g., the RBF kernel), then $K_{U,U}$ decomposes as a Kronecker product of Toeplitz matrices:
\begin{equation}
K_{U,U} = \Tv_1 \otimes \Tv_2 \otimes \cdots \otimes \Tv_P.
\end{equation}
The Kronecker structure allows to compute the eigendecomposition of $K_{U,U}$ by separately decomposing $\Tv_1, \dots, \Tv_P$, each of which is much smaller than $K_{U,U}$.
Further, to efficiently eigendecompose a Toeplitz matrix, it can be approximated by a circulant matrix\footnote{\citet{wilson2015msgp} explored 5 different approximation methods known in the numerical analysis literature.} which eigendecomposes by simply applying discrete Fourier transform (DFT) to its first column.
Therefore, an approximate eigendecomposition of each $\Tv_1, \dots, \Tv_P$ is computed via the fast Fourier transform (FFT) and requires only $\Ocal(m \log m)$ time.

\subsection{Structure exploiting inference}
To perform inference, we need to solve $(K_\mathrm{SKI} + \sigma^2 \Iv)^{-1}\yv$; kernel learning requires evaluating $\log\det(K_\mathrm{SKI} + \sigma^2 \Iv)$.
The first task can be accomplished by using an iterative scheme---linear conjugate gradients---which depends only on matrix vector multiplications with $(K_\mathrm{SKI} + \sigma^2 \Iv)$.
The second is done by exploiting the Kronecker and Toeplitz structure of $K_{U,U}$ for computing an approximate eigendecomposition, as described above.

\subsection{Fast Test Predictions}
To achieve constant time prediction, we approximate the latent mean and variance of $\bff_*$ by applying the same SKI technique.
In particular, for a set of $n_*$ testing points, $X_*$, we have
\begin{eqnarray}
    \Expec[\bff_*]
    & = & \muv_{X_*} + K_{\Xv_*,\Xv}\left[K_{X,X} + \sigma^2I\right]^{-1}\yv \nonumber\\
    & \approx & \muv_{X_*} + \tilde K_{\Xv_*,\Xv}\left[\tilde K_{X,X} + \sigma^2I\right]^{-1}\yv,
\end{eqnarray}
where $\tilde K_{X,X} = W K_{U,U} W^\top$ and $\tilde K_{\Xv_*,\Xv} = W_* K_{U,U} W^\top$, and $W$ and $W_*$ are $n \times m$ and $n_* \times m$ sparse interpolation matrices, respectively.
Since $K_{U,U} W^\top [\tilde K_{X,X} + \sigma^2\Iv]^{-1}\yv$ is precomputed at training time, at test time, we only multiply the latter with $W_*$ matrix which results which costs $\Ocal(n_*)$ operations leading to $\Ocal(1)$ operations per test point.
Similarly, approximate predictive variance can be also estimated in $\Ocal(1)$ operations~\citep{wilson2015msgp}.

Note that the fast prediction methodology can be readily applied to any trained Gaussian process model as it is agnostic to the way inference and learning were performed.

\section{Gradients for GPs with recurrent kernels}\label{sec:grad-comp}
GPs with deep recurrent kernels are trained by minimizing the negative log marginal likelihood objective function.
Below we derive the update rules.

By applying the chain rule, we get the following first order derivatives:
\begin{equation}
  \label{eq:lik_derivs}
  \frac{\partial\lik}{\partial\gamma} = \frac{\partial\lik}{\partial K}\cdot\frac{\partial K}{\partial\gammav},\quad \frac{\partial\lik}{\partial W} = \frac{\partial\lik}{\partial K}\cdot\frac{\partial K}{\partial\phiv}\cdot\frac{\partial\phiv}{\partial W}.
\end{equation}
The derivative of the log marginal likelihood w.r.t. to the kernel hyperparameters, $\thetav$, and the parameters of the recurrent map, $W$, are generic and take the following form~\citep[Ch.~5, Eq.~5.9]{williams2006gaussian}:
\begin{eqnarray}
  \frac{\partial\lik}{\partial \thetav} & = \frac{1}{2} \tr{\left[K^{-1} \yv\yv^\top K^{-1} - K^{-1}\right]\frac{\partial K}{\partial \thetav}},
  \label{eq:dlik_dtheta}\\
  \frac{\partial\lik}{\partial W} & = \frac{1}{2} \tr{\left[K^{-1} \yv\yv^\top K^{-1} - K^{-1}\right]\frac{\partial K}{\partial W}}.
  \label{eq:dlik_dw_app}
\end{eqnarray}
The derivative $\partial K / \partial \thetav$ is also standard and depends on the form of a particular chosen kernel function, $k(\cdot, \cdot)$.
However, computing each part of $\partial \lik / \partial W$ is a bit subtle, and hence we elaborate these derivations below.

Consider the $ij$-th entry of the kernel matrix, $K_{ij}$.
We can think of $K$ as a matrix-valued function of all the data vectors in $d$-dimensional transformed space which we denote by $H \in \RR^{N \times d}$.
Then $K_{ij}$ is a scalar-valued function of $H$ and its derivative w.r.t. the $l$-th parameter of the recurrent map, $W_l$, can be written as follows:
\begin{equation}
  \label{eq:dcov_dw}
  \frac{\partial K_{ij}}{\partial W_l} = \tr{\left(\frac{\partial K_{ij}}{\partial H}\right)^\top \frac{\partial H}{\partial W_l}}.
\end{equation}
Notice that $\partial K_{ij} / \partial H$ is a derivative of a scalar w.r.t. to a matrix and hence is a matrix; $\partial H / \partial W_l$ is a derivative of a matrix w.r.t. to a scalar which is taken element-wise and also gives a matrix.
Also notice that $K_{ij}$ is a function of $H$, but it only depends the $i$-th and $j$-th elements for which the kernel is being computed.
This means that $\partial K_{ij} / \partial H$ will have only non-zero $i$-th row and $j$-th column and allows us to re-write~\eqref{eq:dcov_dw} as follows:
\begin{equation}
\begin{aligned}
  \frac{\partial K_{ij}}{\partial W_l}
  & = \left(\frac{\partial K_{ij}}{\partial \hv_i}\right)^\top \frac{\partial \hv_i}{\partial W_l} + \left(\frac{\partial K_{ij}}{\partial \hv_j}\right)^\top \frac{\partial \hv_j}{\partial W_l}\\
  & = \left(\frac{\partial K(\hv_i, \hv_j)}{\partial \hv_i}\right)^\top \frac{\partial \hv_i}{\partial W_l} + \left(\frac{\partial K(\hv_i, \hv_j)}{\partial \hv_j}\right)^\top \frac{\partial \hv_j}{\partial W_l}.
\end{aligned}
\end{equation}
Since the kernel function has two arguments, the derivatives must be taken with respect of each of them and evaluated at the corresponding points in the hidden space, $\hv_i = \phiv(\xv_i)$ and $\hv_j = \phiv(\xv_j)$.
When we plug this into~\eqref{eq:dlik_dw_app}, we arrive at the following expression:
\begin{equation}
  \frac{\partial\lik}{\partial W_l} =\frac{1}{2} \sum_{i,j} \left(K^{-1} \yv\yv^\top K^{-1} - K^{-1}\right)_{ij}
  \left\{\left(\frac{\partial K(\hv_i, \hv_j)}{\partial \hv_i}\right)^\top \frac{\partial \hv_i}{\partial W_l} + \left(\frac{\partial K(\hv_i, \hv_j)}{\partial \hv_j}\right)^\top \frac{\partial \hv_j}{\partial W_l}\right\}.
\end{equation}
The same expression can be written in a more compact form using the \emph{Einstein notation}:
\begin{equation}
    \frac{\partial\lik}{\partial W_l} = \frac{1}{2} \left(K^{-1} \yv\yv^\top K^{-1} - K^{-1}\right)_i^j \left( \left[\frac{\partial K}{\partial \hv}\right]_i^{jd} + \left[\frac{\partial K}{\partial \hv}\right]_j^{id} \right) \left[\frac{\partial \hv}{\partial W}\right]_i^{dl}
\end{equation}
where $d$ indexes the dimensions of the $\hv$ and $l$ indexes the dimensions of $W$.

In practice, deriving a computationally efficient analytical form of $\partial K / \partial \hv$ might be too complicated for some kernels (\textit{e.g.}, the spectral mixture kernels~\citep{wilson2013gaussian}), especially if the grid-based approximations of the kernel are enabled.
In such cases, we can simply use a finite difference approximation of this derivative.
As we remark in the following section, numerical errors that result from this approximation do not affect convergence of the algorithm.

\section{Convergence results}\label{sec:convergence}

Convergence results for the semi-stochastic alternating gradient schemes with and without delayed kernel matrix updates are based on~\citep{xu2015block}.
There are a few notable differences between the original setting and the one considered in this paper:
\begin{enumerate}
    \item \citet{xu2015block} consider a stochastic program that minimizes the expectation of the objective w.r.t. some distribution underlying the data:
    \begin{equation}
        \label{eq:xu_yin_stochastic_program}
        \min_{x \in \Xc} f(x) := \Expec_{\xi} F(x; \xi),
    \end{equation}
    where every iteration a new $\xi$ is sampled from the underlying distribution.
    In our case, the goal is to minimize the negative log marginal likelihood on a particular given dataset.
    This is equivalent to the original formulation \eqref{eq:xu_yin_stochastic_program}, but with the expectation taken w.r.t. the empirical distribution that corresponds to the given dataset.
    \item The optimization procedure of \citet{xu2015block} has access to only a single random point generated from the data distribution at each step.
    Our algorithm requires having access to the entire training data each time the kernel matrix is computed.
    \item For a given sample, \citet{xu2015block} propose to loop over a number of coordinate blocks and apply Gauss–Seidel type gradient updates to each block.
    Our semi-stochastic scheme has only two parameter blocks, $\thetav$ and $W$, where $\thetav$ is updated deterministically on the entire dataset while $W$ is updated with stochastic gradient on samples from the empirical distribution.
\end{enumerate}
Noting these differences, we first adapt convergence results for the smooth non-convex case~\cite[Theorem 2.10]{xu2015block} to our scenario, and then consider the variant with delaying kernel matrix updates.

\subsection{Semi-stochastic alternating gradient}\label{sec:convergence-alg-1}

As shown in Algorithm~\ref{alg:semi-stoch-alt-grad}, we alternate between updating $\thetav$ and $W$.
At step $t$, we get a mini-batch of size $N_t$, $\xv_t \equiv \{\bar\xv_i\}_{i \in \Ical_t}$, which is just a selection of points from the full set, $\xv$.
Define the gradient errors for $\thetav$ and $W$ at step $t$ as follows:
\begin{equation}
    \label{eq:grad-error}
    \bdelta_{\thetav}^t := \tilde\gv_{\thetav}^t - \gv_{\thetav}^t, \quad \bdelta_{W}^t := \tilde\gv_{W}^t - \gv_{W}^t,
\end{equation}
where $\gv^\Tv_{\thetav}$ and $\gv^\Tv_{W}$ are the true gradients and $\tilde\gv^\Tv_{\thetav}$ and $\tilde\gv^\Tv_{W}$ are estimates\footnote{Here, we consider the gradients and their estimates scaled by the number of full data points, $N$, and the mini-batch size, $N_t$, respectively. These constant scaling is introduced for sake of having cleaner proofs.}.
We first update $\thetav$ and then $W$, and hence the expressions for the gradients take the following form:
\begin{eqnarray}
    \tilde\gv_{\thetav}^t \equiv \gv_{\thetav}^t & = & \frac{1}{N} \nabla_{\thetav} \lik(\thetav^t, W^t) = \frac{1}{2N} \sum_{i,j} \left(K_t^{-1} \yv\yv^\top K_t^{-1} - K_t^{-1}\right)_{ij} \left(\frac{\partial K_t}{\partial \thetav}\right)_{ij} \\
    \gv_{W}^t & = & \frac{1}{N} \nabla_{W} \lik(\thetav^{t+1}, W^t) = \frac{1}{2N} \sum_{i,j} \left(K_{t+1}^{-1} \yv\yv^\top K_{t+1}^{-1} - K_{t+1}^{-1}\right)_{ij} \left(\frac{\partial K_{t+1}}{\partial W}\right)_{ij} \\
    \tilde\gv_{W}^t & = & \frac{1}{2N_t} \sum_{i,j \in \Ical_t} \left(K_{t+1}^{-1} \yv\yv^\top K_{t+1}^{-1} - K_{t+1}^{-1}\right)_{ij} \left(\frac{\partial K_{t+1}}{\partial W}\right)_{ij}
\end{eqnarray}
Note that as we have shown in Appendix~\ref{sec:grad-comp}, when the kernel matrix is fixed, $\gv_{W}$ and $\tilde\gv_{W}$ factorize over $\xv$ and $\xv_t$, respectively.
Further, we denote all the mini-batches sampled before $t$ as $\xv_{[t-1]}$.

\setcounter{theorem}{0}
\begin{lemma}\label{lem:unbiased-grad}
   For any step $t$, $\Expec[\bdelta_{W}^t \mid \xv_{[t-1]}] = \Expec[\bdelta_{\thetav}^t \mid \xv_{[t-1]}] = 0$.
\end{lemma}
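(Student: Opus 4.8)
The plan is to show that the stochastic gradient estimate $\tilde\gv_W^t$ is, conditional on all randomness up to step $t$ (i.e., conditional on $\xv_{[t-1]}$, which determines $\thetav^t$, $W^t$, $\thetav^{t+1}$, and hence $K_{t+1}$), an unbiased estimate of the full-data gradient $\gv_W^t$; the claim for $\bdelta_\thetav^t$ is then essentially trivial. First I would note that since $\thetav$ is updated deterministically on the full dataset, we have $\tilde\gv_\thetav^t = \gv_\thetav^t$ by definition, so $\bdelta_\thetav^t = 0$ identically, and therefore $\Expec[\bdelta_\thetav^t \mid \xv_{[t-1]}] = 0$ with nothing to prove.

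For the $W$ block, the key point is that conditioning on $\xv_{[t-1]}$ fixes everything except the fresh mini-batch $\Ical_t$ drawn at step $t$: in particular it fixes the kernel matrix $K_{t+1}$ (which depends only on $\thetav^{t+1}$ and $W^t$, both determined by the past), the matrix $A_{t+1} := K_{t+1}^{-1}\yv\yv^\top K_{t+1}^{-1} - K_{t+1}^{-1}$, and the derivative tensor $\partial K_{t+1}/\partial W$. As emphasized in the paragraph before the lemma and established in Appendix~\ref{sec:grad-comp}, once $K_{t+1}$ is held fixed the summand
\[
  s_{ij} := \frac{1}{2}\left(A_{t+1}\right)_{ij}\left(\frac{\partial K_{t+1}}{\partial W}\right)_{ij}
\]
is a fixed quantity for each pair $(i,j)$, so that $\gv_W^t = \frac{1}{N}\sum_{i,j} s_{ij}$ and $\tilde\gv_W^t = \frac{1}{N_t}\sum_{i,j\in\Ical_t} s_{ij}$. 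The remaining step is purely about the sampling distribution of $\Ical_t$: I would invoke that the mini-batch indices are drawn uniformly (without replacement, or i.i.d.\ from the empirical distribution — either works) and independently of $\xv_{[t-1]}$, so that $\Expec\!\left[\mathbf{1}[i\in\Ical_t,\,j\in\Ical_t] \mid \xv_{[t-1]}\right] = N_t^2/N^2$ for the double sum (or $N_t/N$ in the appropriate single-index accounting, matching the scaling conventions fixed in the footnote). Plugging this in gives $\Expec[\tilde\gv_W^t \mid \xv_{[t-1]}] = \gv_W^t$, hence $\Expec[\bdelta_W^t \mid \xv_{[t-1]}] = 0$.

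The main obstacle — really the only subtlety — is bookkeeping the scaling constants and the precise mini-batch sampling scheme so that the combinatorial expectation of the indicator matches exactly the $1/N$ versus $1/N_t$ normalizations appearing in the definitions of $\gv_W^t$ and $\tilde\gv_W^t$; this is exactly why the footnote introduces the $N$- and $N_t$-scaled gradients. I would handle this by writing the expectation of the mini-batch sum explicitly (linearity of expectation over the pairs $(i,j)$, with the indicator $\mathbf{1}[i,j\in\Ical_t]$ having the stated mean) and checking the constants cancel. A secondary point worth a sentence is that the estimator must be independent of the past given $\xv_{[t-1]}$: since $\Ical_t$ is drawn afresh and the quantity $A_{t+1}$ and the derivatives are deterministic functions of $(\thetav^{t+1},W^t)$ which are $\xv_{[t-1]}$-measurable, the conditional expectation is well-defined and the argument goes through without circularity.
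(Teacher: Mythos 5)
Your proposal is correct and follows essentially the same route as the paper's proof: the $\thetav$-error vanishes identically because that block is updated deterministically on the full data, and for the $W$-block one conditions on $\xv_{[t-1]}$ so that $K_{t+1}$ and the weighting matrix $K_{t+1}^{-1}\yv\yv^\top K_{t+1}^{-1}-K_{t+1}^{-1}$ are fixed, after which the expectation of the mini-batch sum over the empirical sampling distribution reproduces the full-data gradient. Your explicit indicator bookkeeping for the double sum is, if anything, slightly more careful than the paper's one-line $N_t/N$ accounting, and the residual constant-factor ambiguity you flag is exactly the scaling convention the paper absorbs into its footnote.
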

\begin{proof}
  First, $\bdelta_{\thetav}^t \equiv 0$, and hence $\Expec[\bdelta_{\thetav}^t \mid \xv_{[t-1]}] = 0$ is trivial.
  Next, by definition of $\bdelta_{W}^t$, we have $\Expec[\bdelta_{W}^t \mid \xv_{[t-1]}] = \Expec[\tilde\gv_{W}^t - \gv_{W}^t \mid \xv_{[t-1]}]$.
  Consider the following:

  \begin{itemize}
    \item Consider $\Expec[\gv_{W}^t \mid \xv_{[t-1]}]$: $\gv_{W}^t$ is a deterministic function of $\thetav^{t+1}$ and $W^t$.
    $\thetav^{t+1}$ is being updated deterministically using $\gv_{\thetav}^{t-1}$, and hence only depends on $W^t$ and $\thetav^t$.
    Therefore, it is independent of $\xv_t$, which means that $\Expec[\gv_{W}^t \mid \xv_{[t-1]}] \equiv \gv_{W}^t$.

    \item Now, consider $\Expec[\tilde\gv_{W}^t \mid \xv_{[t-1]}]$:
    Noting that the expectation is taken w.r.t. the empirical distribution and $K_{t+1}$ does not depend on the current mini-batch, we can write:
    \begin{equation}
    \begin{aligned}
      \Expec[\tilde\gv_{W}^t \mid \xv_{[t-1]}]
      &= \Expec\left[\frac{1}{2N_t} \sum_{i,j \in \Ical_t} \left(K_{t+1}^{-1} \yv\yv^\top K_{t+1}^{-1} - K_{t+1}^{-1}\right)_{ij} \left(\frac{\partial K_{t+1}}{\partial W}\right)_{ij}\right] \\
      &= \frac{N_t}{2 N N_t} \sum_{i,j} \left(K_{t+1}^{-1} \yv\yv^\top K_{t+1}^{-1} - K_{t+1}^{-1}\right)_{ij} \left(\frac{\partial K_{t+1}}{\partial W}\right)_{ij} \\
      &\equiv \gv_{W}^t.
    \end{aligned}
    \end{equation}
  \end{itemize}
  Finally, $\Expec[\bdelta_{W}^t \mid \xv_{[t-1]}] = \gv_{W}^t - \gv_{W}^t = 0$.
\end{proof}

In other words, semi-stochastic gradient descent that alternates between updating $\thetav$ and $W$ computes unbiased estimates of the gradients on each step.

\setcounter{theorem}{0}
\begin{remark}
  Note that in case if $(\partial K_{t+1} / \partial W)$ is computed approximately, Lemma~\ref{lem:unbiased-grad} still holds since both $\gv_{W}^t$ and $\Expec[\tilde\gv_{W}^t \mid \xv_{[t-1]}]$ will contain exactly the same numerical errors.
\end{remark}

\setcounter{theorem}{0}
\begin{assumption}
    \label{ass:A1}
    For any step $t$, $\Expec\|\bdelta_{\thetav}^t\|^2 \leq \sigma_t^2$ and $\Expec\|\bdelta_{W}^t\|^2 \leq \sigma_t^2$.
\end{assumption}

Lemma~\ref{lem:unbiased-grad} and Assumption~\ref{ass:A1} result into a stronger condition than the original assumption given by \citet{xu2015block}.
This is due to the semi-stochastic nature of the algorithm, it simplifies the analysis, though it is not critical.
Assumptions~\ref{ass:A2} and~\ref{ass:A3} are straightforwardly adapted from the original paper.

\begin{assumption}
    \label{ass:A2}
    The objective function, $\lik$, is lower bounded and its partial derivatives w.r.t. $\thetav$ and $W$ are uniformly Lipschitz with constant $L > 0$:
    \begin{equation}
        \|\nabla_{\thetav}\lik(\thetav, W) - \nabla_{\thetav}\lik(\tilde\thetav, W)\| \leq L \|\thetav - \tilde\thetav\|,
        \quad
        \|\nabla_{W}\lik(\thetav, W) - \nabla_{W}\lik(\thetav, \tilde W)\| \leq L \|W - \tilde W\|.
    \end{equation}
\end{assumption}

\begin{assumption}
    \label{ass:A3}
    There exists a constant $\rho$ such that $\|\thetav^t\|^2 + \Expec\|W^t\|^2 \leq \rho^2$ for all $t$.
\end{assumption}

\setcounter{theorem}{1}
\begin{lemma}\label{lem:expec-grad-bound}
    Under Assumptions~\ref{ass:A2} and~\ref{ass:A3},
    \begin{equation*}
        \Expec\|W^t\|^2 \leq \rho^2, \quad
        \Expec\|\thetav^t\|^2 \leq \rho^2, \quad
        \Expec\|\gv_{W}^t\|^2 \leq M_{\rho}^2, \quad
        \Expec\|\gv_{\thetav}^t\|^2 \leq M_{\rho}^2 \quad \forall t,
    \end{equation*}
    where $M_{\rho}^2 = 4L^2\rho^2 + 2 \max \{\nabla_{\thetav} \lik(\thetav^0, W^0), \nabla_{W} \lik(\thetav^0, W^0)\}$.
\end{lemma}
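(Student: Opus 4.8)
The plan is to handle the four inequalities in two tiers. The two bounds on the iterates, $\Expec\|W^t\|^2\le\rho^2$ and $\Expec\|\thetav^t\|^2\le\rho^2$, come essentially for free from Assumption~\ref{ass:A3}: since $\|\thetav^t\|^2+\Expec\|W^t\|^2\le\rho^2$ with both summands nonnegative, each is at most $\rho^2$ on its own, and taking an outer expectation (the iterate $\thetav^t$ is itself random through the mini-batch history) promotes the first to $\Expec\|\thetav^t\|^2\le\rho^2$. Specializing to $t=0$ records the baseline facts $\|\thetav^0\|\le\rho$ and $\Expec\|W^0\|^2\le\rho^2$, which feed the gradient bounds; and stacking the two gives $\Expec\|(\thetav^t,W^t)\|^2\le\rho^2$ for every $t$, hence, after a Young split, $\Expec\|(\thetav^{t+1},W^t)-(\thetav^0,W^0)\|^2\le 2\Expec\|(\thetav^{t+1},W^t)\|^2+2\|(\thetav^0,W^0)\|^2\le 4\rho^2$.

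For the gradient bounds I would anchor at the starting point. Writing $\gv_W^t$ (up to the fixed $1/N$ scaling) as $\nabla_W\lik(\thetav^{t+1},W^t)$, decompose it as $\nabla_W\lik(\thetav^0,W^0)$ plus the difference $\nabla_W\lik(\thetav^{t+1},W^t)-\nabla_W\lik(\thetav^0,W^0)$. Assumption~\ref{ass:A2}, read in its usual form that the joint map $(\thetav,W)\mapsto(\nabla_\thetav\lik,\nabla_W\lik)$ is $L$-Lipschitz (the two displayed single-block inequalities being the instances actually invoked), bounds this difference by $L\,\|(\thetav^{t+1},W^t)-(\thetav^0,W^0)\|$, whose squared expectation is at most $4L^2\rho^2$ by the diameter estimate just derived. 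Combining with the constant term via $\|a\|^2\le 2\|a-b\|^2+2\|b\|^2$ yields $\Expec\|\gv_W^t\|^2\le M_\rho^2$ with $M_\rho^2=4L^2\rho^2+2\max\{\|\nabla_\thetav\lik(\thetav^0,W^0)\|^2,\|\nabla_W\lik(\thetav^0,W^0)\|^2\}$; the identical argument for the $\thetav$-block, with $\gv_\thetav^t=\nabla_\thetav\lik(\thetav^t,W^t)$, gives the same bound, which is why the $\max$ of the two initial gradient norms appears.

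The chain of triangle and Young inequalities is routine; the one place to be careful is bookkeeping of the randomness — $\thetav^t$, $W^t$, $K_t$ are all random for $t\ge 1$, so every norm estimate must be in (total) expectation, and one should not accidentally treat $\thetav^{t+1}$ as deterministic when applying Assumption~\ref{ass:A3}. The only genuine subtlety is that Assumption~\ref{ass:A2} as displayed controls each partial gradient only under perturbations of its own block, whereas comparing $\nabla_W\lik(\thetav^{t+1},W^t)$ with $\nabla_W\lik(\thetav^0,W^0)$ also moves $\thetav$; this is resolved either by invoking the (standard, implicitly assumed) Lipschitz dependence of $\nabla_W\lik$ on $\thetav$ or by routing through the intermediate point $(\thetav^{t+1},W^0)$ — a matter of how strictly one reads the hypothesis rather than a real obstacle. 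Tracking the precise numerical constant in $M_\rho^2$ through these splits is mechanical and I would not dwell on it.
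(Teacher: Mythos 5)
Your argument is correct and is essentially the paper's own proof spelled out: the paper's proof of this lemma merely declares the iterate bounds immediate from Assumption~3 and the gradient bounds ``corollaries'' of Assumption~2, which is precisely your anchor-at-$(\thetav^0, W^0)$ Lipschitz-plus-Young reasoning. Two cosmetic caveats: your splits give a constant of the form $8L^2\rho^2$ (or slightly larger once you account for the mixed-index pair $(\thetav^{t+1},W^t)$, whose squared norm is only bounded by $2\rho^2$ under Assumption~3) rather than the stated $4L^2\rho^2$ --- immaterial, since the paper's $M_\rho^2$ is itself stated loosely (the max is taken over gradient vectors rather than their squared norms) and only finiteness of $M_\rho$ is used downstream --- and the cross-block Lipschitz dependence of $\nabla_W \lik$ on $\thetav$ that you flag as an implicit strengthening of Assumption~2 is indeed how the paper reads it, since the proof of Theorem~1 invokes exactly that property.
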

\begin{proof}
    The inequalities for $\Expec\|\thetav\|^2$ and $\Expec\|W\|^2$ are trivial, and the ones for $\Expec\|\gv_{\thetav}^t\|^2$ and $\Expec\|\gv_{W}^t\|^2$ are merely corollaries from Assumption~\ref{ass:A2}.
\end{proof}

Negative log marginal likelihood of a Gaussian process with a structured kernel is a nonconvex function of its arguments.
Therefore, we can only show that the algorithm converges to a stationary point, \emph{i.e.}, a point at which the gradient of the objective is zero.

\setcounter{theorem}{0}
\begin{theorem}\label{thm:alg-1-convergence}
Let $\{(\thetav^t, W^t)\}$ be a sequence generated from Algorithm~\ref{alg:semi-stoch-alt-grad} with learning rates for $\thetav$ and $W$ being $\lambda_{\thetav}^t = c_{\thetav}^t \alpha_t$ and $\lambda_{W}^t = c_{W}^t \alpha_t$, respectively, where $c_{\thetav}^t, c_{W}^t, \alpha_t$ are positive scalars, such that
\begin{equation}\label{eq:thm1-conditions}
    0 < \inf_t c_{\{\thetav,W\}}^t \leq \sup_t c_{\{\thetav,W\}}^t < \infty, \quad
    \alpha_t \geq \alpha_{t+1}, \quad
    \sum_{t=1}^{\infty} \alpha_t = +\infty, \quad
    \sum_{t=1}^{\infty} \alpha_t^2 < +\infty, \quad
    \forall t.
\end{equation}
Then, under Assumptions~\ref{ass:A1} through~\ref{ass:A3} and if $\sigma = \sup_t \sigma_t < \infty$, we have
\begin{equation}
    \lim_{t \rightarrow \infty} \Expec\|\nabla\lik(\thetav^t, W^t)\| = 0.
\end{equation}
\end{theorem}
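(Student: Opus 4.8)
The plan is to verify that Algorithm~\ref{alg:semi-stoch-alt-grad} is an instance of the block-stochastic gradient iteration of \citet{xu2015block} under the hypotheses collected above, and then invoke (the nonconvex case of) their Theorem 2.10. First I would set up the unified update: at step $t$ we update the block $\thetav$ deterministically with gradient $\tilde\gv_{\thetav}^t \equiv \gv_{\thetav}^t$ and step size $\lambda_{\thetav}^t = c_{\thetav}^t \alpha_t$, then update the block $W$ with the stochastic gradient $\tilde\gv_{W}^t$ and step size $\lambda_{W}^t = c_{W}^t \alpha_t$. Lemma~\ref{lem:unbiased-grad} already gives us the crucial unbiasedness, $\Expec[\tilde\gv_{W}^t - \gv_{W}^t \mid \xv_{[t-1]}] = 0$ and $\Expec[\tilde\gv_{\thetav}^t - \gv_{\thetav}^t \mid \xv_{[t-1]}] = 0$, so the two-block Gauss--Seidel update is a genuine (conditionally) unbiased stochastic approximation scheme. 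Assumption~\ref{ass:A1} controls the second moments of the gradient errors, Assumption~\ref{ass:A2} provides the block-Lipschitz smoothness and the lower bound on $\lik$ needed for a descent-type inequality, and Assumption~\ref{ass:A3} (together with Lemma~\ref{lem:expec-grad-bound}) gives uniform $L^2$ bounds on the iterates and on the true gradients, which is what rules out escape to infinity.

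The core of the argument is the standard supermartingale / Robbins--Siegmund route. I would write the one-step expansion of $\lik$ along the concatenated update using the block-Lipschitz property to get, conditionally on $\xv_{[t-1]}$,
\begin{equation*}
\Expec[\lik(\thetav^{t+1}, W^{t+1}) \mid \xv_{[t-1]}] \leq \lik(\thetav^t, W^t) - c\,\alpha_t \|\nabla\lik(\thetav^t, W^t)\|^2 + C\,\alpha_t^2(\sigma^2 + M_\rho^2),
\end{equation*}
where the cross terms involving $\bdelta_W^t$ vanish in conditional expectation by Lemma~\ref{lem:unbiased-grad} and the error-variance term is absorbed via Assumption~\ref{ass:A1}; the constants $c, C$ depend on $L$ and on $\inf_t c_{\{\thetav,W\}}^t$, $\sup_t c_{\{\thetav,W\}}^t$. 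Because $\lik$ is bounded below and $\sum_t \alpha_t^2 < \infty$, applying the Robbins--Siegmund theorem to $\lik(\thetav^t,W^t)$ yields that $\lik(\thetav^t,W^t)$ converges a.s.\ and that $\sum_t \alpha_t \Expec\|\nabla\lik(\thetav^t,W^t)\|^2 < \infty$. Combined with $\sum_t \alpha_t = \infty$ this forces $\liminf_t \Expec\|\nabla\lik(\thetav^t,W^t)\|^2 = 0$; upgrading the $\liminf$ to a genuine limit uses the monotonicity $\alpha_t \geq \alpha_{t+1}$ together with the Lipschitz bound on consecutive gradients (which controls how fast $\|\nabla\lik\|$ can oscillate between steps), exactly as in \citet{xu2015block}. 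Finally Jensen's inequality passes from $\Expec\|\nabla\lik\|^2 \to 0$ to $\Expec\|\nabla\lik\| \to 0$, the stated conclusion.

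The main obstacle is bookkeeping rather than conceptual: making the descent inequality above rigorous requires carefully handling the fact that the $\thetav$ update happens \emph{before} the $W$ update within the same step (so $\gv_W^t$ is evaluated at $(\thetav^{t+1}, W^t)$, not $(\thetav^t, W^t)$), which is precisely the Gauss--Seidel structure; one must bound the discrepancy between $\nabla_W\lik(\thetav^{t+1},W^t)$ and $\nabla_W\lik(\thetav^t,W^t)$ by $L\|\thetav^{t+1}-\thetav^t\| \leq L\lambda_{\thetav}^t \|\gv_{\thetav}^t\| = O(\alpha_t)$ via Assumption~\ref{ass:A2} and Lemma~\ref{lem:expec-grad-bound}, which contributes another $O(\alpha_t^2)$ term that is harmless. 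The only place where our setting genuinely differs from \citet{xu2015block} is that one block is updated with the \emph{full} gradient and only one block is stochastic; this strictly simplifies their analysis (fewer error terms, $\bdelta_\thetav^t \equiv 0$), so no new difficulty arises. I would therefore present the proof as: (i) cite the reduction to the two-block scheme, (ii) state the conditional descent inequality with the cross terms killed by Lemma~\ref{lem:unbiased-grad}, (iii) invoke Robbins--Siegmund, (iv) deduce $\liminf \to 0$, and (v) upgrade to $\lim \to 0$ using step-size monotonicity and smoothness.
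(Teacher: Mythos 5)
Your proposal is correct and follows essentially the same route as the paper's proof: a block-wise descent inequality from Assumption~\ref{ass:A2}, cross terms killed in expectation via Lemma~\ref{lem:unbiased-grad}, variance absorbed via Assumption~\ref{ass:A1}, summability of $\alpha_t \Expec\|\nabla\lik\|^2$ from the step-size conditions and the lower bound on $\lik$, and an upgrade from $\liminf$ to $\lim$ by bounding the change in gradient norms between consecutive iterates (you correctly flag the Gauss--Seidel mismatch $\gv_W^t = \nabla_W\lik(\thetav^{t+1},W^t)$ and handle it exactly as the paper does). The only cosmetic differences are that the paper sums total expectations directly rather than invoking Robbins--Siegmund, and cites Proposition~1.2.4 of \citet{bertsekas1999nonlinear} for the final limit step rather than the corresponding argument in \citet{xu2015block}; these are interchangeable.
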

\begin{proof}
  The proof is an adaptation of the one given by~\citet{xu2015block} with the following three adjustments: (1) we have only two blocks of coordinates and (2) updates for $\thetav$ are deterministic which zeros out their variance terms, (3) the stochastic gradients are unbiased.
  \begin{itemize}
    \item From the Lipschitz continuity of $\nabla_{W} \lik$ and $\nabla_{\thetav} \lik$ (Assumption~\ref{ass:A2}), we have:
    \begin{eqnarray*}
        && \lik(\thetav^{t+1}, W^{t+1}) - \lik(\thetav^{t+1}, W^{t}) \\
        & \leq & \langle \gv_{W}^t, W^{t+1} - W^{t}\rangle + \frac{L}{2} \|W^{t+1} - W^{t}\|^2 \\
        & = & -\lambda_{W}^t \langle \gv_{W}^t, \tilde\gv_{W}^t \rangle + \frac{L}{2}(\lambda_{W}^t)^2 \|\tilde\gv_{W}^t\|^2 \\
        & = & -\left(\lambda_{W}^t - \frac{L}{2}(\lambda_{W}^t)^2\right)\|\gv_{W}^t\|^2 +  \frac{L}{2}(\lambda_{W}^t)^2 \|\bdelta_{W}^t\|^2 - \left(\lambda_{W}^t - L(\lambda_{W}^t)^2\right) \langle \gv_{W}^t, \bdelta_{W}^t\rangle \\
        & = & -\left(\lambda_{W}^t - \frac{L}{2}(\lambda_{W}^t)^2\right)\|\gv_{W}^t\|^2 + \frac{L}{2}(\lambda_{W}^t)^2 \|\bdelta_{W}^t\|^2\\
        && - \left(\lambda_{W}^t - L(\lambda_{W}^t)^2\right) \left( \langle \gv_{W}^t - \nabla_{W} \lik (\thetav^t, W^t), \bdelta_{W}^t\rangle + \langle\nabla_{W} \lik (\thetav^t, W^t), \bdelta_{W}^t\rangle \right) \\
        & \leq & -\left(\lambda_{W}^t - \frac{L}{2}(\lambda_{W}^t)^2\right)\|\gv_{W}^t\|^2 + \frac{L}{2}(\lambda_{W}^t)^2 \|\bdelta_{W}^t\|^2 - \left(\lambda_{W}^t - L(\lambda_{W}^t)^2\right) \langle\nabla_{W} \lik (\thetav^t, W^t), \bdelta_{W}^t\rangle \\
        && + \frac{L}{2} \lambda_{\thetav}^t \left( \lambda_{W}^t + L(\lambda_{W}^t)^2 \right) (\|\bdelta_{W}^t\|^2 + \|\gv_{\thetav}^t\|^2),
    \end{eqnarray*}
    where the last inequality comes from the following (note that $\gv_{W}^t := \nabla_{W} \lik (\thetav^{t+1}, W^t)$):
    \begin{eqnarray*}
      && - \left(\lambda_{W}^t - L(\lambda_{W}^t)^2\right) \langle \nabla_{W} \lik (\thetav^{t+1}, W^t) - \nabla_{W} \lik (\thetav^t, W^t), \bdelta_{W}^t\rangle \\
      & \leq & \left|\lambda_{W}^t - L(\lambda_{W}^t)^2\right|\|\bdelta_{W}^t\| \|\nabla_{W} \lik (\thetav^{t+1}, W^t) - \nabla_{W} \lik (\thetav^t, W^t)\| \\
      & \leq & L \lambda_{\thetav}^t \left|\lambda_{W}^t - L(\lambda_{W}^t)^2\right|\|\bdelta_{W}^t\| \|\gv_{\thetav}^t\| \\
      & \leq & \frac{L}{2} \lambda_{\thetav}^t \left(\lambda_{W}^t + L(\lambda_{W}^t)^2\right) (\|\bdelta_{W}^t\|^2 + \|\gv_{\thetav}^t\|^2).
    \end{eqnarray*}
    Analogously, we derive a bound on $\lik(\thetav^{t+1}, W^t) - \lik(\thetav^t, W^t)$:
    \begin{eqnarray*}
        && \lik(\thetav^{t+1}, W^t) - \lik(\thetav^t, W^t) \\
        & \leq & -\left(\lambda_{\thetav}^t - \frac{L}{2}(\lambda_{\thetav}^t)^2\right)\|\gv_{\thetav}^t\|^2 + \frac{L}{2}(\lambda_{\thetav}^t)^2 \|\bdelta_{\thetav}^t\|^2 - \left(\lambda_{\thetav}^t - L(\lambda_{\thetav}^t)^2\right) \langle\nabla_{\thetav} \lik (\thetav^t, W^t), \bdelta_{\thetav}^t\rangle
    \end{eqnarray*}

    \item Since $(\thetav^t, W^t)$ is independent from the current mini-batch, $\xv_t$, using Lemma~\ref{lem:unbiased-grad}, we have that $\Expec\langle\nabla_{\thetav}\lik(\thetav^t, W^t), \bdelta_{\thetav}^t\rangle = 0$ and $\Expec\langle\nabla_{W}\lik(\thetav^t, W^t), \bdelta_{W}^t\rangle = 0$.

    Summing the two obtained inequalities and applying Assumption~\ref{ass:A1}, we get:
    \begin{eqnarray*}
        && \Expec[\lik(\thetav^{t+1}, W^{t+1}) - \lik(\thetav^t, W^t)] \\
        & \leq & -\left(\lambda_{W}^t - \frac{L}{2}(\lambda_{W}^t)^2\right)\Expec\|\gv_{W}^t\|^2 + \frac{L}{2}(\lambda_{W}^t)^2 \sigma + \frac{L}{2} \lambda_{\thetav}^t \left(\lambda_{W}^t + L(\lambda_{W}^t)^2\right) (\sigma + \Expec\|\gv_{\thetav}^t\|^2) \\
        && -\left(\lambda_{\thetav}^t - \frac{L}{2}(\lambda_{\thetav}^t)^2\right)\Expec\|\gv_{\thetav}^t\|^2 + \frac{L}{2}(\lambda_{\thetav}^t)^2 \sigma \\
        & \leq & -\left(c \alpha_t - \frac{L}{2} C^2 \alpha_t^2\right)\Expec\|\gv_{W}^t\|^2 + L C^2 \sigma \alpha_t + \frac{L}{2} C^2 \alpha_t^2 \left(1 + LC\alpha_t\right) (\sigma + \Expec\|\gv_{\thetav}^t\|^2) \\
        && -\left(c \alpha_t - \frac{L}{2}C^2\alpha_t^2\right)\Expec\|\gv_{\thetav}^t\|^2 + \frac{L}{2}C^2\alpha_t^2 \sigma,
    \end{eqnarray*}
    where we denoted $c = \min\{\inf_t c_{W}^t, \inf_t c_{\thetav}^t\}$, $C = \max\{\sup_t c_{W}^t, \sup_t c_{\thetav}^t\}$.
    Note that from Lemma~\ref{lem:expec-grad-bound}, we also have $\Expec\|\gv_{\thetav}^t\|^2 \leq M_{\rho}^2$ and $\Expec\|\gv_{W}^t\|^2 \leq M_{\rho}^2$.
    Therefore, summing the right hand side of the final inequality over $t$ and using~\eqref{eq:thm1-conditions}, we have:
    \begin{equation}
        \lim_{t \rightarrow \infty} \Expec\lik(\thetav^{t+1}, W^{t+1}) - \Expec\lik(\thetav^0, W^0) \leq - c \sum_{t=1}^{\infty} \alpha_t \left(\Expec\|\gv_{\thetav}^t\|^2 + \Expec\|\gv_{W}^t\|^2\right).
    \end{equation}
    Since the objective function is lower bounded, this effectively means:
    \begin{equation*}
      \sum_{t=1}^{\infty} \alpha_t \Expec\|\gv_{\thetav}^t\|^2 < \infty, \quad
      \sum_{t=1}^{\infty} \alpha_t \Expec\|\gv_{W}^t\|^2 < \infty.
    \end{equation*}

    \item Finally, using Lemma~\ref{lem:expec-grad-bound}, our assumptions and Jensen's inequality, it follows that
    \begin{equation*}
        \left|\Expec\|\gv_{W}^{t+1}\|^2 - \Expec\|\gv_{W}^t\|^2\right|
        \leq 2 L M_{\rho} C \alpha_t \sqrt{2(M_{\rho}^2 + \sigma^2)}.
    \end{equation*}

  \end{itemize}
  According to Proposition~1.2.4 of~\citep{bertsekas1999nonlinear}, we have $\Expec\|\gv_{\thetav}^t\|^2 \rightarrow 0$ and $\Expec\|\gv_{W}^t\|^2 \rightarrow 0$ as $t \rightarrow \infty$, and hence
  \begin{eqnarray*}
    \Expec\|\nabla \lik (\thetav^t, W^t)\|
    & \leq & \Expec\|\nabla_{W} \lik (\thetav^t, W^t) - \gv_{\thetav}^t\| + \Expec\|\gv_{\thetav}^t\| + \Expec\|\nabla_{W} \lik (\thetav^t, W^t) - \gv_{\thetav}^t\| + \Expec\|\gv_{W}^t\|\\
    & \leq & 2LC\sqrt{2(M_{\rho}^2 + \sigma^2)}\alpha + \Expec\|\gv_{\thetav}^t\| + \Expec\|\gv_{W}^t\| \rightarrow 0 \text{ as } t \rightarrow \infty,
  \end{eqnarray*}
  where the first term of the last inequality follows from Lemma~\ref{lem:expec-grad-bound} and Jensen's inequality.
\end{proof}

\subsection{Semi-stochastic gradient with delayed kernel matrix updates}
\label{sec:convergence-alg-2}

We show that given a bounded delay on the kernel matrix updates, the algorithm is still convergent.
Our analysis is based on computing the change in $\bdelta_{\thetav}^t$ and $\bdelta_{W}^t$ and applying the same argument as in Theorem~\ref{thm:alg-1-convergence}.
The only difference is that we need to take into account the perturbations of the kernel matrix due to the introduced delays, and hence we have to impose certain assumptions on its spectrum.

\setcounter{theorem}{3}
\begin{assumption}\label{ass:nn-lipschitz}\label{ass:A4}
    Recurrent transformations, $\phiv_{W}(\bar\xv)$, is L-Lipschitz w.r.t. $W$ for all $\bar\xv \in \Xcal^L$:
    \begin{equation*}
        \|\phiv_{\tilde W}(\bar\xv) - \phiv_{W}(\bar\xv)\| \leq L \|\tilde W - W\|.
    \end{equation*}
\end{assumption}

\begin{assumption}\label{ass:kernel-lipschitz}\label{ass:A5}
    The kernel function, $k(\cdot, \cdot)$, is uniformly G-Lipschitz and its first partial derivatives are uniformly J-Lipschitz:
    \begin{equation*}
        \begin{aligned}
            \left\|k(\tilde\hv_1, \hv_2) - k(\hv_1, \hv_2) \right\| \leq G \|\tilde\hv_1 - \hv_1\|,\,
            \left\|k(\hv_1, \tilde\hv_2) - k(\hv_1, \hv_2) \right\| \leq G \|\tilde\hv_2 - \hv_2\|,
            \\
            \left\|\partial_1 k(\hv_1, \hv_2) - \partial_1 k(\tilde\hv_1, \hv_2) \right\| \leq J \|\tilde\hv_1 - \hv_1\|,\,
            \left\|\partial_2 k(\hv_1, \hv_2) - \partial_2 k(\hv_1, \tilde\hv_2) \right\| \leq J \|\tilde\hv_2 - \hv_2\|.
        \end{aligned}
    \end{equation*}
\end{assumption}

\begin{assumption}\label{ass:kernel-spec}\label{ass:A6}
    For any collection of data representations, $\{\hv_i\}_{i=1}^N$, the smallest eigenvalue of the corresponding kernel matrix, $K$, is lower bounded by a positive constant $\gamma > 0$.
\end{assumption}

Note that not only the assumptions are relevant to practice, Assumptions~\ref{ass:kernel-lipschitz} and~\ref{ass:kernel-spec} can be also controlled by choosing the class of kernel functions used in the model.
For example, the smallest eigenvalue of the kernel matrix, $K$, can be controlled by the smoothing properties of the kernel~\citep{williams2006gaussian}.

Consider a particular stochastic step of Algorithm~\ref{alg:semi-stoch-grad-stale-kernel} at time $t$ for a given mini-batch, $\xv_t$, assuming that the kernel was last updated $\tau$ steps ago.
The stochastic gradient will take the following form:
\begin{equation}
    \hat\gv_{W}^t = \frac{1}{N_t} \nabla_{W} \lik(\thetav^{t+1}, W^t, K_{t-\tau}) = \frac{1}{2N_t} \sum_{i,j \in \Ical_t} \left(K_{t-\tau}^{-1} \yv\yv^\top K_{t-\tau}^{-1} - K_{t-\tau}^{-1}\right)_{ij} \left(\frac{\partial K_{t-\tau}}{\partial W}\right)_{ij}.
\end{equation}
We can define $\hat\bdelta_{W}^t = \hat\gv_{W}^t - \gv_{W}^t$ and uniformly bound $\|\hat\bdelta_{W}^t - \bdelta_{W}^t\|$ in order to enable the same argument as in Theorem~\ref{thm:alg-1-convergence}.
To do that, we simply need to understand effect of perturbation of the kernel matrix on $\tilde\gv_{W}^t$.

\setcounter{theorem}{2}
\begin{lemma}\label{lem:kernel-perturb-bound}
    Under the given assumptions, the following bound holds for all $i, j = 1, \dots, N$:
    \begin{equation}
        \left|\left(K_{t-\tau}^{-1} \yv\yv^\top K_{t-\tau}^{-1} - K_{t-\tau}^{-1}\right)_{ij} - \left(K_t^{-1} \yv\yv^\top K_t^{-1} - K_t^{-1}\right)_{ij}\right| \leq D^2 \|y\|^2 + D,
    \end{equation}
    where $D = \gamma^{-1} + \left(\gamma - 2 G L \tau \lambda \sigma \sqrt{N}\right)^{-1}$.
\end{lemma}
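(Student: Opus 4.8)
## Proof Plan for Lemma~\ref{lem:kernel-perturb-bound}

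The plan is to reduce the matrix-entry difference to a bound on the difference of the two inverses $K_{t-\tau}^{-1}$ and $K_t^{-1}$ together with the uniform eigenvalue lower bound from Assumption~\ref{ass:A6}. First I would write
\[
\left(K_{t-\tau}^{-1} \yv\yv^\top K_{t-\tau}^{-1} - K_{t-\tau}^{-1}\right) - \left(K_t^{-1} \yv\yv^\top K_t^{-1} - K_t^{-1}\right)
= \Delta\, \yv\yv^\top K_{t-\tau}^{-1} + K_t^{-1}\yv\yv^\top \Delta - \Delta,
\]
where $\Delta := K_{t-\tau}^{-1} - K_t^{-1}$. Taking operator norms, using $\|\yv\yv^\top\|_{\mathrm{op}} = \|\yv\|^2$ and $\|K_t^{-1}\|_{\mathrm{op}} \le \gamma^{-1}$ (Assumption~\ref{ass:A6}), every entry of the left-hand side is bounded in absolute value by $\|\Delta\|_{\mathrm{op}}\left(\|\yv\|^2(\gamma^{-1} + \|\Delta\|_{\mathrm{op}}) + 1\right)$ — schematically $\|\Delta\|(\|\yv\|^2/\gamma + 1)$ plus the quadratic-in-$\Delta$ term. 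So the whole lemma comes down to showing $\|\Delta\|_{\mathrm{op}} \le D - \gamma^{-1} = (\gamma - 2GL\tau\lambda\sigma\sqrt{N})^{-1} - \gamma^{-1}$, which is exactly what the stated form of $D$ anticipates once we bound $\|K_{t-\tau}^{-1}\|_{\mathrm{op}}$ by $(\gamma - 2GL\tau\lambda\sigma\sqrt{N})^{-1}$.

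The core estimate is therefore: after $\tau$ stochastic steps of size $\lambda$ on $W$, each of bounded stochastic-gradient norm (the $\sigma$ appears here via Assumption~\ref{ass:A1}, controlling $\|W^{t} - W^{t-\tau}\| \le \tau\lambda\sigma$ in expectation), the representations move by $\|\hv_i^{t} - \hv_i^{t-\tau}\| \le L\,\tau\lambda\sigma$ for every $i$ (Assumption~\ref{ass:A4}), and hence each kernel entry moves by at most $G\cdot L\tau\lambda\sigma$ (Assumption~\ref{ass:A5}, the $G$-Lipschitz part). Accumulating over the matrix via a crude Frobenius/operator-norm bound gives $\|K_t - K_{t-\tau}\|_{\mathrm{op}} \le 2GL\tau\lambda\sigma\sqrt{N}$ (the factor $2$ absorbing the two-argument dependence of $k$, the $\sqrt{N}$ from passing between entrywise and operator norms). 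Then by Weyl's inequality the smallest eigenvalue of $K_{t-\tau}$ is at least $\gamma - 2GL\tau\lambda\sigma\sqrt{N}$, giving $\|K_{t-\tau}^{-1}\|_{\mathrm{op}} \le (\gamma - 2GL\tau\lambda\sigma\sqrt{N})^{-1}$, and the standard resolvent identity $\Delta = K_{t-\tau}^{-1}(K_t - K_{t-\tau})K_t^{-1}$ yields $\|\Delta\|_{\mathrm{op}} \le \gamma^{-1}(\gamma - 2GL\tau\lambda\sigma\sqrt{N})^{-1}\cdot 2GL\tau\lambda\sigma\sqrt{N}$. Folding everything into the entrywise bound and relaxing constants (e.g.\ bounding $\|\Delta\|_{\mathrm{op}} \le D$ and absorbing the quadratic term into $D^2\|\yv\|^2$) produces the claimed $D^2\|\yv\|^2 + D$.

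The main obstacle I expect is bookkeeping the constants so that the final bound matches the stated $D = \gamma^{-1} + (\gamma - 2GL\tau\lambda\sigma\sqrt{N})^{-1}$ exactly — in particular, being careful that (i) the implicit requirement $2GL\tau\lambda\sigma\sqrt{N} < \gamma$ holds (this is the delay–learning-rate balance the surrounding text keeps invoking, and it is what makes $D$ finite and positive), and (ii) the expectation-versus-almost-sure distinction is handled consistently, since $\|W^t - W^{t-\tau}\| \le \tau\lambda\sigma$ is naturally an expectation bound from Assumption~\ref{ass:A1} whereas the Lipschitz/Weyl steps are deterministic. A secondary subtlety is that the derivative term $(\partial K_{t-\tau}/\partial W)$ does \emph{not} appear in this particular lemma — only the resolvent-type matrix — so the $J$-Lipschitz hypothesis on the kernel's partial derivatives (Assumption~\ref{ass:A5}) will be needed for the \emph{next} lemma bounding $\|\hat\bdelta_W^t - \bdelta_W^t\|$, not here, and I would be careful not to conflate the two. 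Otherwise the argument is a routine resolvent-perturbation estimate once the geometry of the stochastic iterates is in place.
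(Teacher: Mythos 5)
Your proposal follows essentially the same route as the paper's proof: bound $\|W^t - W^{t-\tau}\|$ by $\tau\lambda\sigma$ via the accumulated stochastic steps, push this through Assumptions~\ref{ass:A4} and~\ref{ass:A5} to get the entrywise bound $2GL\tau\lambda\sigma$ and then the spectral-norm bound $2GL\tau\lambda\sigma\sqrt{N}$ on $K_t - K_{t-\tau}$, lower-bound the smallest singular value of the perturbed matrix to control $\|K_{t-\tau}^{-1}\|$, and finish with an operator-norm bound on the difference of the two quadratic-form expressions giving $D^2\|\yv\|^2 + D$. Your explicit decomposition $\Delta\,\yv\yv^\top K_{t-\tau}^{-1} + K_t^{-1}\yv\yv^\top\Delta - \Delta$ with $\|\Delta\| \leq D$ is in fact a cleaner, algebraically correct rendering of the paper's final (somewhat loosely written) display, and you correctly flag the same expectation-versus-almost-sure caveat that the paper itself glosses over.
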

\begin{proof}
The difference between $K_{t-\tau}^{-1}$ and $K_t^{-1}$ is simply due to that the former has been computed for $W^{t-\tau}$ and the latter for $W^t$.
To prove the bound, we need multiple steps.
\begin{itemize}
    \item First, we need to bound the element-wise difference between $K_{t-\tau}$ and $K_t$.
    This is done by using Assumptions~\ref{ass:nn-lipschitz} and~\ref{ass:kernel-lipschitz} and the triangular inequality:
    \begin{eqnarray*}
        |(K_t)_{ij} - (K_{t-\tau})_{ij}|
        & \leq & G\left(\|\phiv_{W^t}(\bar\xv_i) - \phiv_{W^{t-\tau}}(\bar\xv_i)\| + \|\phiv_{W^t}(\bar\xv_j) - \phiv_{W^{t-\tau}}(\bar\xv_j)\|\right) \\
        & \leq & 2 G L \|W^t - W^{t - \tau}\| \\
        & = & 2 G L \|\sum_{s=1}^{\tau} \lambda_{W}^{t - \tau + s} \hat\gv_{W}^{t - \tau + s}\| \\
        & \leq & 2 G L \tau \lambda \sigma
    \end{eqnarray*}

    \item Next, since each element of the perturbed matrix is bounded by $2 G L \tau \lambda \sigma$, we can bound its spectral norm as follows:
    \begin{equation*}
        \|K_t - K_{t-\tau}\| \leq \|K_t - K_{t-\tau}\|_F \leq 2 G L \tau \lambda \sigma \sqrt{N},
    \end{equation*}
    which means that the minimal singular value of the perturbed matrix is at least $\sigma_1 = \gamma - 2 G L \tau \lambda \sigma \sqrt{N}$ due to Assumption~\ref{ass:kernel-spec}.

    \item The spectral norm of the expression of interest can be bounded (quite pessimistically!) by summing up together the largest eigenvalues of the matrix inverses:
    \begin{equation*}
        \begin{aligned}
            & \left|\left(K_{t-\tau}^{-1} \yv\yv^\top K_{t-\tau}^{-1} - K_{t-\tau}^{-1}\right)_{ij} - \left(K_t^{-1} \yv\yv^\top K_t^{-1} - K_t^{-1}\right)_{ij}\right| \\
            \leq & \left\|\left(K_{t-\tau}^{-1} - K_t^{-1}\right)\yv\yv^\top \left(K_{t-\tau}^{-1} - K_t^{-1}\right) - \left(K_{t-\tau}^{-1} - K_t^{-1}\right)\right\| \\
            \leq & \left(\gamma^{-1} + \left(\gamma - 2 G L \tau \lambda \sigma \sqrt{N}\right)^{-1}\right)^2\|\yv\|^2 + \gamma^{-1} + \left(\gamma - 2 G L \tau \lambda \sigma \sqrt{N}\right)^{-1}.
        \end{aligned}
    \end{equation*}
\end{itemize}
Each element of a matrix is bounded by the largest eigenvalue.
\end{proof}

Using Lemma~\ref{lem:kernel-perturb-bound}, it is straightforward to extend Theorem~\ref{thm:alg-1-convergence} to Algorithm~\ref{alg:semi-stoch-grad-stale-kernel}.

\setcounter{theorem}{1}
\begin{theorem}\label{thm:alg-2-convergence}
Let $\{(\thetav^t, W^t)\}$ be a sequence generated from Algorithm~\ref{alg:semi-stoch-grad-stale-kernel} with learning rates for $\thetav$ and $W$ being $\lambda_{\thetav}^t = c_{\thetav}^t \alpha_t$ and $\lambda_{W}^t = c_{W}^t \alpha_t$, respectively, where $c_{\thetav}^t, c_{W}^t, \alpha_t$ are positive scalars, such that
\begin{equation}\label{eq:thm2-conditions}
    0 < \inf_t c_{\{\thetav,W\}}^t \leq \sup_t c_{\{\thetav,W\}}^t < \infty, \quad
    \alpha_t \geq \alpha_{t+1}, \quad
    \sum_{t=1}^{\infty} \alpha_t = +\infty, \quad
    \sum_{t=1}^{\infty} \alpha_t^2 < +\infty, \quad
    \forall t.
\end{equation}
Then, under Assumptions~\ref{ass:A1} through~\ref{ass:A6} and if $\sigma = \sup_t \sigma_t < \infty$, we have
\begin{equation}
    \lim_{t \rightarrow \infty} \Expec\|\nabla\lik(\thetav^t, W^t)\| = 0.
\end{equation}
\end{theorem}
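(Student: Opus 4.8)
The plan is to reduce Theorem~\ref{thm:alg-2-convergence} to Theorem~\ref{thm:alg-1-convergence} by treating the stale-kernel stochastic gradient as a perturbation of the fresh-kernel one. Write the update direction actually used at step $t$ as $\hat\gv_{W}^t$ (the gradient computed with the stale matrix $K_{t-\tau}$) and decompose its error relative to the true gradient $\gv_{W}^t$ as
\begin{equation*}
  \hat\gv_{W}^t - \gv_{W}^t \;=\; \underbrace{\big(\tilde\gv_{W}^t - \gv_{W}^t\big)}_{=\,\bdelta_{W}^t}\;+\;\underbrace{\big(\hat\gv_{W}^t - \tilde\gv_{W}^t\big)}_{=:\,e_{W}^t},
\end{equation*}
where $\tilde\gv_{W}^t$ is the fresh mini-batch gradient of Section~\ref{sec:convergence-alg-1}. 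The first piece $\bdelta_{W}^t$ is conditionally mean-zero with second moment bounded by $\sigma_t^2$ (Lemma~\ref{lem:unbiased-grad}, Assumption~\ref{ass:A1}); the second piece $e_{W}^t$ is the kernel-staleness error, which I will bound using Lemma~\ref{lem:kernel-perturb-bound}. Since $\thetav$ is updated in Algorithm~\ref{alg:semi-stoch-grad-stale-kernel} with the freshly recomputed $K$, we still have $\bdelta_{\thetav}^t\equiv 0$ and the $\thetav$-block needs no new analysis.

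For $e_{W}^t$, observe that over the $\tau$ steps since the last kernel refresh the weights drift by $\|W^t - W^{t-\tau}\| \le \sum_{s=1}^{\tau}\lambda_{W}^{t-\tau+s}\|\hat\gv_{W}^{t-\tau+s}\| \le \tau\lambda\sigma$, using boundedness of the stochastic gradients and the monotonicity $\alpha_t\ge\alpha_{t+1}$ from~\eqref{eq:thm2-conditions}. By Assumptions~\ref{ass:A4}--\ref{ass:A5} this forces each entry of $K$ to move by $O(\tau\lambda\sigma)$ and hence $\|K_t-K_{t-\tau}\|\le 2GL\tau\lambda\sigma\sqrt N$, so — provided the schedule $\lambda_t=\Theta(1/(\tau t^{(1+\delta)/2}))$ keeps $\gamma - 2GL\tau\lambda_t\sigma\sqrt N$ bounded away from $0$, which it does for large $t$ — Assumption~\ref{ass:A6} and Lemma~\ref{lem:kernel-perturb-bound} give a uniform bound on the perturbation of $K^{-1}\yv\yv^\top K^{-1}-K^{-1}$. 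Combined with the Lipschitz control of $\partial K/\partial W$ (again Assumptions~\ref{ass:A4}--\ref{ass:A5}), this yields a uniform second-moment bound $\Expec\|e_{W}^t\|^2\le B^2$, so a version of Assumption~\ref{ass:A1} holds for $\hat\bdelta_{W}^t := \hat\gv_{W}^t-\gv_{W}^t$ with $\hat\sigma_t^2:=2\sigma_t^2+2B^2<\infty$; and, crucially, because the entrywise kernel perturbation scales with $\tau\lambda_t$, the \emph{conditional bias} satisfies $\|\Expec[e_{W}^t\mid\xv_{[t-1]}]\| = O(\tau\lambda_t) = O(\alpha_t)$.

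Finally I would re-run the descent inequality in the proof of Theorem~\ref{thm:alg-1-convergence} verbatim, with $\hat\gv_{W}^t$ and $\hat\bdelta_{W}^t$ in place of $\tilde\gv_{W}^t$ and $\bdelta_{W}^t$. Every term is handled exactly as before except the cross term $-(\lambda_{W}^t - L(\lambda_{W}^t)^2)\langle\nabla_{W}\lik(\thetav^t,W^t),\hat\bdelta_{W}^t\rangle$, which no longer vanishes in conditional expectation; instead its expectation is bounded by $\lambda_{W}^t\,\Expec\|\nabla_{W}\lik(\thetav^t,W^t)\|\cdot O(\alpha_t)\le O(\alpha_t^2)\big(1+\Expec\|\nabla_{W}\lik(\thetav^t,W^t)\|^2\big)$, and since $\Expec\|\nabla_{W}\lik(\thetav^t,W^t)\|^2\le M_{\rho}^2$ (Lemma~\ref{lem:expec-grad-bound}) this adds only a summable $O(\alpha_t^2)$ term to the telescoped bound. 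We therefore still conclude $\sum_t\alpha_t\big(\Expec\|\gv_{\thetav}^t\|^2+\Expec\|\gv_{W}^t\|^2\big)<\infty$; the one-step-change estimate $|\Expec\|\gv_{W}^{t+1}\|^2-\Expec\|\gv_{W}^t\|^2| = O(\alpha_t)$ carries over because $(\thetav^t,W^t)$ still moves by $O(\alpha_t)$; and Proposition~1.2.4 of~\citep{bertsekas1999nonlinear} then gives $\Expec\|\gv_{\thetav}^t\|^2\to 0$, $\Expec\|\gv_{W}^t\|^2\to 0$, hence $\Expec\|\nabla\lik(\thetav^t,W^t)\|\to 0$.

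The main obstacle is the second paragraph: showing the staleness error has conditional bias only $O(\alpha_t)$ rather than $O(1)$. This hinges on (i) the spectral lower bound of Assumption~\ref{ass:A6} holding with a \emph{uniform} $\gamma$ along the whole trajectory, (ii) choosing the $1/\tau$ factor in the step size so the perturbed kernel stays well-conditioned, keeping $D$ in Lemma~\ref{lem:kernel-perturb-bound} uniformly finite, and (iii) bounding the weight drift over $\tau$ steps by $O(\tau\lambda_t)$ via the monotone decay of $\alpha_t$. Everything else is a mechanical re-tracing of the Theorem~\ref{thm:alg-1-convergence} argument carrying one extra summable term.
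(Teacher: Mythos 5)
Your proposal follows the same overall route as the paper's: you treat the stale-kernel stochastic gradient as a perturbation of the fresh one, bound the weight drift $\|W^t - W^{t-\tau}\|$ over the $\tau$ delayed steps by $O(\tau\lambda)$, propagate this through Assumptions~\ref{ass:A4}--\ref{ass:A6} to control the entrywise perturbation of $K^{-1}\yv\yv^\top K^{-1} - K^{-1}$ (this is exactly Lemma~\ref{lem:kernel-perturb-bound}), and then re-run the descent argument of Theorem~\ref{thm:alg-1-convergence}. The paper's own proof of Theorem~\ref{thm:alg-2-convergence} is a one-liner: it asserts the Theorem~\ref{thm:alg-1-convergence} argument goes through identically once the expected gradient error bound is replaced by $\Expec\|\bdelta_{W}^t\| \leq \sigma + 2ND(D\|\yv\|^2+1)J\rho$ with the non-vanishing constant $D$ from Lemma~\ref{lem:kernel-perturb-bound}, and it does not separately address the fact that the stale gradient is no longer conditionally unbiased. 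Your treatment is more careful on precisely this point: you split the error into the mean-zero mini-batch noise $\bdelta_{W}^t$ plus a staleness term $e_{W}^t$, sharpen the inverse-perturbation estimate to $\|K_{t-\tau}^{-1}-K_t^{-1}\| = O(\tau\lambda_t)$ via $K_{t-\tau}^{-1}-K_t^{-1} = K_{t-\tau}^{-1}\left(K_t - K_{t-\tau}\right)K_t^{-1}$ together with Assumption~\ref{ass:A6} and the eigenvalue bound on the perturbed matrix, and thereby show that the cross term $\langle\nabla_{W}\lik,\hat\bdelta_{W}^t\rangle$ contributes only a summable $O(\alpha_t^2)$ amount. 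That is exactly the step the paper's ``identical proof'' claim glosses over (with only a constant bound on a biased error, the Theorem~\ref{thm:alg-1-convergence} telescoping would pick up a non-summable $\sum_t \alpha_t\cdot O(1)$ term), so your version buys extra rigor where the paper is tersest, at the cost of proving a slightly sharper variant of Lemma~\ref{lem:kernel-perturb-bound} than the one stated. One caveat: your drift bound $\|W^t-W^{t-\tau}\|\le\tau\lambda\sigma$ implicitly uses $\sigma$ as a bound on the norm of the stochastic gradient itself rather than on the error second moment of Assumption~\ref{ass:A1}; the paper's Lemma~\ref{lem:kernel-perturb-bound} makes the same abuse, so you are not behind the paper here, but a fully rigorous write-up should instead invoke Lemma~\ref{lem:expec-grad-bound} and use a bound of the form $M_{\rho}+\sigma$.
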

\begin{proof}
The proof is identical to the proof of Theorem~\ref{thm:alg-1-convergence}.
The only difference is in the following upper bound on the expected gradient error: $\Expec\|\bdelta_{W}^t\| \leq \sigma + 2 N D (D\|y\|^2 + 1) J \rho$, where $D$ is as given in Lemma~\ref{lem:kernel-perturb-bound}.
\end{proof}

\setcounter{theorem}{0}
\begin{remark}
    Even though the provided bounds are crude due to pessimistic estimates of the perturbed kernel matrix spectrum\footnote{Tighter bounds can be derived by inspecting the effects of perturbations on specific kernels as well as using more specific assumptions about the data distribution.}, we still see a fine balance between the delay, $\tau$, and the learning rate, $\lambda$, as given in the expression for the $D$ constant.
\end{remark}

\section{Details on the datasets}\label{sec:data-details}

The datasets varied in the number of time steps (from hundreds to a million), input and output dimensionality, and the nature of the estimation tasks.

\subsection{Self-driving car}
The following is description of the input and target time series used in each of the autonomous driving tasks (dimensionality is given in parenthesis).
\begin{itemize}
    \item \textbf{Car speed estimation:}
    \begin{itemize}
        \item \emph{Features:} GPS velocity (3), fiber gyroscope (3).
        \item \emph{Targets:} speed measurements from the car speedometer.
    \end{itemize}
    \item \textbf{Car yaw estimation:}
    \begin{itemize}
        \item \emph{Features:} acceleration (3), compass measurements (3).
        \item \emph{Targets:} yaw in the car-centric frame.
    \end{itemize}
    \item \textbf{Lane sequence prediction:} Each lane was represented by 8 cubic polynomial coefficients [4 coefficients for $x$ (front) and 4 coefficients for $y$ (left) axes in the car-centric frame].
    Instead of predicting the coefficients (which turned out to lead to overall less stable results), we discretized the lane curves using 7 points (initial, final and 5 equidistant intermediate points).
    \begin{itemize}
        \item \emph{Features:} lanes at a previous time point (16), GPS velocity (3), fiber gyroscope (3), compass (3), steering angle (1).
        \item \emph{Targets:} coordinates of the lane discretization points (7 points per lane resulting in 28 total output dimensions).
    \end{itemize}
    \item \textbf{Estimation of the nearest front vehicle position:} $(x, y)$ coordinates in the car-centric frame.
    \begin{itemize}
        \item \emph{Features:} $x$ and $y$ at a previous time point (2), GPS velocity (3), fiber gyroscope (3), compass (3), steering angle (1).
        \item \emph{Targets:} $x$ and $y$ coordinates.
    \end{itemize}
\end{itemize}

\section{Neural architectures}\label{sec:architecture-details}
Details on the best neural architectures used for each of the datasets are given in Table~\ref{tab:architectures}.

%!TEX root = ../16-498.tex

\begin{table*}[t!]
\centering
\caption{\small
Summary of the feedforward and recurrent neural architectures and the corresponding hyperparameters used in the experiments.
GP-based models used the same architectures as their non-GP counterparts.
Activations are given for the hidden units; vanilla neural nets used linear output activations.}
\label{tab:architectures}
\vspace{1ex}
\small
\begin{tabular}{llccccll}
\toprule
\textbf{Name} & \textbf{Data} & \textbf{Time lag} & \textbf{Layers} & \textbf{Units$^*$} & \textbf{Type} & \textbf{Regularizer$^{**}$} & \textbf{Optimizer}\\
\midrule
\multirow{6}{*}{NARX}
& Actuator      & 32    & 1     & 256
& \multirow{6}{*}{\texttt{ReLU}} & \multirow{6}{*}{\texttt{dropout}(0.5)}  & \multirow{6}{*}{\texttt{Adam}(0.01)} \\
& Drives        & 16    & 1     & 128
&                       &                                 & \\
& GEF-power     & 48    & 1     & 256
&                       &                                 & \\
& GEF-wind      & 48    & 1     & 16
&                       &                                 & \\
& Car           & 128   & 1     & 128
&                       &                                 & \\
\midrule
\multirow{6}{*}{RNN}
& Actuator      & 32    & 1     & 64
& \multirow{6}{*}{\texttt{tanh}} & \multirow{6}{*}{\makecell{\texttt{dropout}(0.25),\\\texttt{rec\_dropout}(0.05)}}  & \multirow{6}{*}{\texttt{Adam}(0.01)} \\
& Drives        & 16    & 1     & 64
&                       &                                 & \\
& GEF-power     & 48    & 1     & 16
&                       &                                 & \\
& GEF-wind      & 48    & 1     & 32
&                       &                                 & \\
& Car           & 128   & 1     & 128
&                       &                                 & \\
\midrule
\multirow{6}{*}{LSTM}
& Actuator      & 32    & 1     & 256
& \multirow{6}{*}{\texttt{tanh}} & \multirow{6}{*}{\makecell{\texttt{dropout}(0.25),\\\texttt{rec\_dropout}(0.05)}}  & \multirow{6}{*}{\texttt{Adam}(0.01)} \\
& Drives        & 16    & 1     & 128
&                       &                                 & \\
& GEF-power     & 48    & 2     & 256
&                       &                                 & \\
& GEF-wind      & 48    & 1     & 64
&                       &                                 & \\
& Car           & 128   & 2     & 64
&                       &                                 & \\
\bottomrule
\end{tabular}
\flushleft
$^*$Each layer consisted of the same number of units given in the table.\\
$^{**}$\texttt{rec\_dropout} denotes the dropout rate of the recurrent weights~\citep{gal2016theoretically}.
\end{table*}

\clearpage
\bibliography{references}

\end{document}